\newtheorem{lemma}{Lemma}
\newtheorem{remark}{Remark}
\newcommand{\set}[1]{\{ #1\}}
\newcommand{\R}{\mathbb{R}}
\newcommand{\N}{\mathbb{N}}
\newtheorem{proposition}{Proposition}
\newcommand{\fcnote}[1]%
    {\textcolor{orange}{\textbf{FC: #1}}}
\newcommand{\david}[1]{\textcolor{magenta}{[David: #1]}}
\newcommand{\twnote}[1]%
    {\textcolor{cyan}{\textbf{TW: #1}}}
\newcommand{\aanote}[1]%
    {\textcolor{blue}{\textbf{AA: #1}}}
\newcommand{\ksnote}[1]%
    {\textcolor{red}{\textbf{KS: #1}}}    
\newcommand{\jlnote}[1]%
    {\textcolor{magenta}{\textbf{JL: #1}}} 
\newlength\tindent
\newcommand{\msnote}[1]%
    {\textcolor{cyan}{\textbf{MS: #1}}}
\newtheorem{theorem}{Theorem}
\newcommand{\parens}[1]{\left( #1 \right)}
\newcommand{\brackets}[1]{\left[ #1 \right]}
\newcommand{\Rn}{\mathbb{R}^n}
\newcommand{\statespace}{\mathcal{X}}
\newcommand{\controlspace}{\mathcal{U}}
\newcommand{\gainspace}{\R^k}
\newcommand{\x}{x_t}
\newcommand{\y}{y_t}
\newcommand{\xnext}{x_{t+1}}
\newcommand{\ynext}{y_{t+1}}
\newcommand{\vel}{v_t}
\newcommand{\velnext}{v_{t+1}}
\newcommand{\ha}{\phi_t}
\newcommand{\hanext}{\phi_{t+1}}
\newcommand{\deltat}{\Delta t}
\newcommand{\accel}{a_t}
\newcommand{\turnrate}{\omega_t}
\newcommand{\xdes}{x^\text{des}_t}
\newcommand{\Komg}{K_\omega}
\newcommand{\accelscale}{\beta_a}
\newcommand{\turnscale}{\beta_\omega}
\newcommand{\veldrag}{c_v}
\newcommand{\turnbias}{b_\omega}
\newcommand{\mismatch}{\gamma}
\title{Enabling Efficient, Reliable Real-World Reinforcement Learning with Approximate Physics-Based Models
}
\author{
  Tyler Westenbroek\\
  Oden Institute\\
  University of Texas at Austin\\
  \texttt{westenbroekt@gmail.com} \\
  \And
   Jacob Levy \\
   Aerospace Engineering\\
   University of Texas at Austin \\
   \texttt{jake.levy@utexas.edu} \\
   \And
   David Fridovich-Keil \\
   Aerospace Engineering \\
   University of Texas at Austin \\
   \texttt{dfk@utexas.edu} \\
}
\begin{document}
\maketitle

\begin{abstract}
We focus on developing efficient and reliable policy optimization strategies for robot learning with real-world data. 
In recent years, policy gradient methods have emerged as a promising paradigm for training control policies in simulation. 
However, these approaches often remain too data inefficient or unreliable to train on real robotic hardware. In this paper we introduce a novel policy gradient-based policy optimization framework which systematically leverages a (possibly highly simplified) first-principles model and enables learning precise control policies with limited amounts of real-world data. Our approach $1)$ uses the derivatives of the model to produce sample-efficient estimates of the policy gradient and $2)$ uses the model to design a low-level tracking controller, which is embedded in the policy class. Theoretical analysis provides insight into how the presence of this feedback controller overcomes key limitations of stand-alone policy gradient methods, while hardware experiments with a small car and quadruped demonstrate that our approach can learn precise control strategies reliably and with only minutes of real-world data. Code is available at \url{https://github.com/CLeARoboticsLab/LearningWithSimpleModels.jl}
\end{abstract}

\begin{figure}[ht]
     \centering         \includegraphics[width=0.99\textwidth]{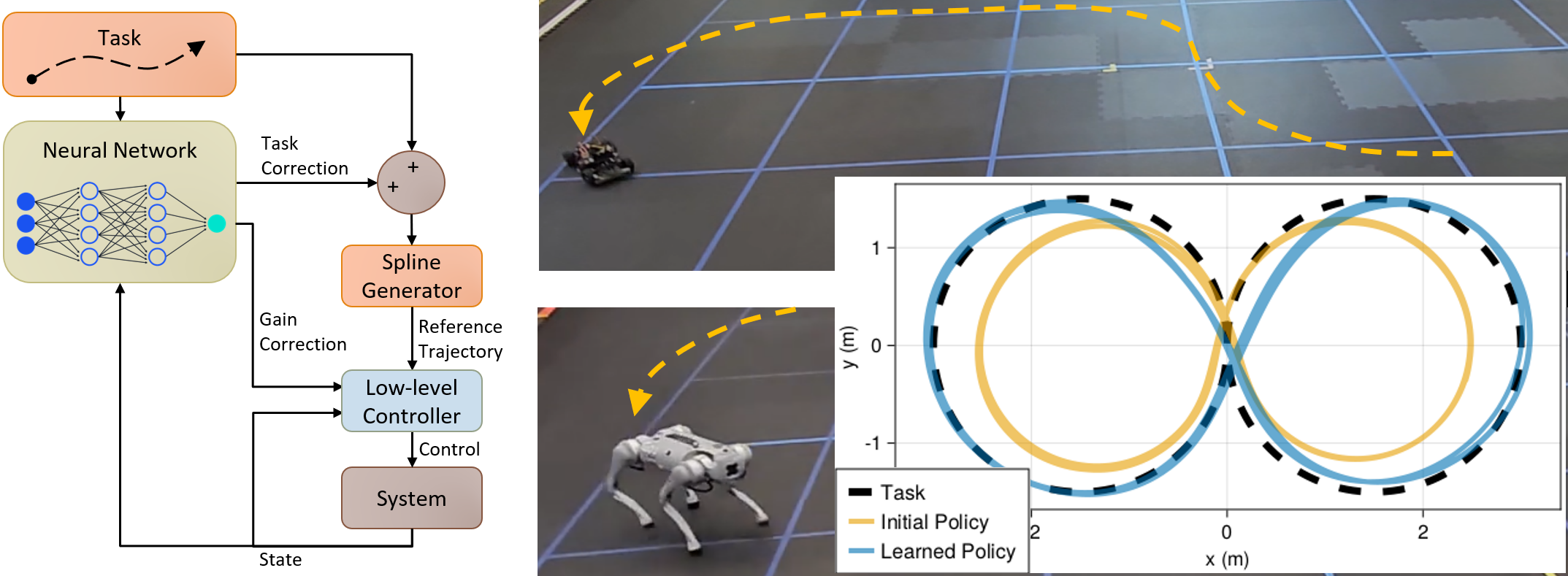}
        \caption{(Left) Schematic of the proposed policy structure, the crucial element of which is a low-level stabilizing controller which improves the smoothness properties of the underlying problem, improving learning.
        (Middle) Still frames depicting the approximate paths taken by a car and quadruped during test-time. 
        (Overlaid) Top-down view of the car executing two laps of around a figure-8 before and after training.}
        \label{fig:front}
\end{figure}

\section{Introduction}\label{sec:intro}
Reliable, high-performance robot decision making revolves around the robot's ability to learn a control policy which effectively leverages complex real-world dynamics over long time-horizons. This presents a challenge, as constructing a highly accurate physics-based model for the system using first-principles is often impractical. In recent years, reinforcement learning methods built around policy gradient estimators have emerged as a promising general paradigm for learning an effective policy using data collected from the system. However, in current practice these approaches are often too data-inefficient or unreliable to train with real hardware data, leading many approaches to train on high-fidelity simulation environments \cite{li2021reinforcement, dao2022sim, tan2018sim}. However, there inevitably exists a gap between simulated and physical reality, leaving room to improve policy performance in the real world. In this paper, we demonstrate how to systematically leverage a physics-based model to yield highly efficient and reliable policy optimization techniques capable of learning with real-world data.

Modern techniques for policy learning generally fall into two categories: model-free \cite{nagabandi2018learning,nagabandi2018neural,chua2018deep,deisenroth2011pilco} and model-based \cite{haarnoja2018learning,PPO,silver2014deterministic, schulman2015trust, lillicrap2015continuous}. Model-free approaches learn a mapping from states to inputs directly from data. These approaches are fully general and can synthesize high-performance policies, but are extremely data-inefficient. In contrast, model-based approaches use the collected data to fit a predictive model to estimate how the system will behave at points not contained in the training set. While these approaches are more data-efficient, model inaccuracies introduce bias into policy gradient estimators \cite{janner2019trust, abbeel2006using}, limiting the performance of the learned policy. 

However, due to the unstable nature of many robotic systems, both of these paradigms suffer from a more fundamental challenge: minute changes to the control policy can greatly impact performance over long time horizons. This ``exploding gradients'' phenomenon \cite{metz2021gradients}, \cite{suh2022differentiable}, \cite{parmas2018pipps} leads the variance of policy gradient algorithms to grow exponentially with the  time-horizon and renders the underlying policy learning problem ill conditioned, making gradient-based methods slow to converge \cite{bertsekas2000gradient}.  Moreover, model bias also compounds rapidly over time, limiting the effectiveness of otherwise efficient model-based approaches \cite{janner2019trust}. 

As shown in \cref{fig:front}, this paper systematically exploits an approximate physics-based model and low-level feedback control to overcome these challenges in policy learning. Concretely, the contributions are: 
\begin{itemize}
    \item We introduce a novel framework which uses the approximate model to simultaneously design $1)$ a policy gradient estimator and $2)$ low-level tracking controllers which we then embed into the learned policy class. Using the model to construct the gradient estimator removes the need to learn about the real-world dynamics from scratch, while the low-level feedback controller prevents gradient estimation error from ``exploding''. 
  
    \item Theoretical analysis and illustrative examples demonstrate how we overcome exponential dependencies in the model-bias, variance and smoothness of policy gradient estimators.
    
    \item We validate our theoretical findings with a variety of simulated and physical experiments, ultimately demonstrating our method's data efficiency, run-time performance, and most importantly, ability to overcome substantial model mismatch. Overall, this paper suggests a new holistic paradigm for rapidly fine-tuning controllers using real-world data. 
\end{itemize}

\section{Related Work} \vspace{-0.2cm}
\label{sec:related-work}

 Broadly speaking, there are two possible sources of bias when using a model for policy gradient estimation. The first source of error can arise if the model is used to simulate or `hallucinate' trajectories for the system which are then added to the data set
\cite{janner2019trust, sutton1990integrated, buckman2018sample, gu2016continuous}. 
While this approach yields a larger training set, it also introduces bias as the trajectories generated by the model can rapidly diverge from the corresponding real-world trajectory. 
To overcome this source of error, a number of works \cite{abbeel2006using, heess2015learning, amann1996iterative} have proposed policy gradient estimators which $1)$ collect real-world trajectories and $2)$ use the derivatives of a (possibly learned) model to propagate approximate policy gradient information along these trajectories. We adopt this form of estimator in this work, and note strong connections to the updates used in the Iterative Learning Control literature \cite{ahn2007iterative}.

Evaluating the gradient along real trajectories removes the first source of error. However, inaccuracies in the derivatives of the model lead to a second source of error and, as we demonstrate in Section~\ref{sec:exploding-grads}, these errors can grow exponentially over long time horizons for unstable robotic systems. Moreover, prior works have demonstrated that exploding gradients lead to a large variance for policy gradient estimators and ill-conditioning in the underlying policy optimization problem \cite{metz2021gradients}, \cite{suh2022differentiable}, \cite{parmas2018pipps}. We demonstrate how low-level feedback control can overcome this second source of error, while reducing variance and improving conditioning. While the use of hiearchical control architectures with embedded low-level feedback has been a key ingredient in many sim-to-real reinforcement learning frameworks \cite{lee2020learning}, \cite{miki2022learning}, \cite{li2021reinforcement}, we argue that the combination of the aforementioned pieces opens the door for a new real-world training paradigm that fully leverages our approximate physics-based models. 

\vspace{-0.2cm}
\section{Problem Formulation}\label{sec:formulation}
\vspace{-0.2cm}
We assume access to a simplified, physics-based model of the environment dynamics of the form:
\begin{equation}\label{eq:model_dyn}
x_{t+1} = \hat{F}(x_t,u_t),
\end{equation}
where $x_t \in \mathcal{X} \subset \R^n$ is the \emph{state}, $u_t \in \mathcal{U} \subset \mathbb{R}^m$ is the \emph{input} and the (potentially nonlinear) map $\hat{F} \colon \mathcal{X} \times \mathcal{U} \to \mathcal{X}$ determines how the state evolves over discrete time steps $t \in \mathbb{N}$. To make the modelling process and down-stream controller synthesis tractable, such models are necessarily built on  simplifying assumptions. For example, the model we use to control the RC car in \cref{fig:front} neglects physical quantities such as drag and motor time-delays. Nonetheless, such models capture the basic structure of how controller inputs will affect desired quantities (such as position) over time, and are highly useful for designing effective control architectures. 

Although many reinforcement learning frameworks model the environment as a stochastic process, to aid in our analysis, we will assume that the real-world dynamics evolve deterministically, according to the (possibly nonlinear) relation:
\begin{equation}\label{eq:true_dyn}
x_{t+1} = F(x_t,u_t).
\end{equation}
To control the real-world system, we will optimize over a controller architecture of the form $u_t = \pi_t^\theta(x_t)$ where $\pi^\theta = \set{\pi_t^\theta}_{t=0}^{T-1}$ represents the overall policy, $T < \infty$ is the finite horizon for the task we wish to solve, $\theta \in \Theta \subseteq \mathbb{R}^p$ is the policy parameter, and each map $\pi_t^\theta \colon \mathcal{X} \to \mathcal{U}$ is assumed to be differentiable in both $x$ and $\theta$. Thus equipped, we pose the following policy optimization problem:
\begin{equation}
\max_{\theta \in \Theta}\mathcal{J}(\theta) := \mathbb{E}_{x_0 \sim D} [J_T(\theta;x_0)] \ \ \ \text{where} \ \ \ J_T(\theta;x_0) := \sum_{t=0}^{T} R(x_t).
\end{equation}
Here, $D$ is the probability density of the initial state $x_0$ and $R$ is the (differentiable) reward function.


\vspace{-0.2cm}
\section{Approximating the Policy Gradient with an Imprecise Dynamics Model}\vspace{-0.2cm}
In this section we demonstrate how to calculate the policy gradient by differentiating the real-world dynamics map $F$ along trajectories generated by the current policy. We then introduce the estimator used in this paper, which replaces the derivatives of $F$ with the derivatives of the first-principles model $\hat{F}$. We will initially focus on the gradient $\nabla J_{T}(\theta;x_0)$ of the reward experienced when unrolling the policy from a single initial conditon $x_0 \in \mathcal{X}$, and then discuss how to approximate the total policy gradient $\nabla \mathcal{J}(\theta)$ using a batch estimator. To ease notation, for each $x_0 \in \mathcal{X}$ and $\theta \in \theta$ we capture the resulting real-world trajectory generated by $\pi^\theta$ via the sequence of maps defined by:
\begin{align*}
 \phi^\theta_{t+1}(x_0) = F\big(\phi^\theta_t(x_0),\pi^\theta_t(\phi^\theta_t(x_0)) \big),  \ \ \ \ \ \phi_0^\theta(x_0) = x_0.
\end{align*}

\textbf{Structure of the True Policy Gradient}: 
Fix an initial condition
$x_0 \in \mathcal{D}$ and policy parameter $\theta \in \Theta$. We let $\set{x_t}_{t=0}^T$ and $\set{u_t}_{t=0}^{T-1}$ (with $x_t = \phi_t^\theta(x_0)$ and $u_t = \pi_{t}^\theta(x_t)$)  denote the corresponding sequences of states and inputs generated by the policy $\pi^\theta$. The policy gradient captures how changes to the controller parameters will affect the resulting trajectory and the accumulation of future rewards. 
The following shorthand captures the \emph{closed-loop sensitivity} of the state and input to changes in the policy parameters: $\frac{\partial x_t}{\partial \theta} :=\frac{\partial}{\partial \theta} \phi_t^\theta(x_0),\  \frac{\partial u_t}{\partial \theta} := \frac{\partial}{\partial \theta} \pi_t^{\theta}(\phi_{t}^\theta(x_0))$
These terms depend on the derivatives of the dynamics, which we denote with:
\begin{equation}
A_t = \frac{\partial}{\partial x} F(x_t,u_t), \ \ \ \ \ B_t =\frac{\partial}{\partial u}F(x_t,u_t), \ \  \ \ \ K_t = \frac{\partial}{\partial x} \pi_t^\theta(x_t;x_0). 
\end{equation}

\begin{proposition}
\label{prop:polgrad}
The policy gradient is given by the following expression: 
\begin{equation}\label{eq:true_grad}
\nabla J_T(\theta; x_0)= \sum_{t=0}^{T} \nabla R(x_t) \cdot \frac{\partial x_t}{\partial \theta }, \text{ where } \vspace{-0.3cm}
\end{equation}
\begin{equation*}
    \frac{\partial x_t}{\partial \theta} = \sum_{t' = 0}^{t-1} \Phi_{t,t'} B_{t'} \frac{\partial \pi_{t}^\theta}{\partial \theta}, \ \ \ \ \ \ \Phi_{t,t'} := \prod_{s=t'+1}^{t-1} A_t^{cl}, \ \  \ \ \ \ \text{and } A_t^{cl} = A_t +B_tK_t.
\end{equation*}
\end{proposition}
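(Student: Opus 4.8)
The plan is to treat this as a standard forward-sensitivity (discrete variation-of-constants) computation for the closed-loop dynamics. The outer identity in \eqref{eq:true_grad} will follow immediately from the chain rule applied to the additively separable cost, so the real content lies in establishing the closed form for the state sensitivity $\frac{\partial x_t}{\partial \theta}$, which I would obtain by deriving a linear recursion and then solving it explicitly.

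First I would differentiate $J_T(\theta;x_0) = \sum_{t=0}^{T} R(x_t)$ directly with respect to $\theta$. Since each $x_t = \phi_t^\theta(x_0)$ is a smooth function of $\theta$ (by the assumed differentiability of $F$ and $\pi^\theta$) and $R$ is differentiable, the chain rule gives $\nabla J_T(\theta;x_0) = \sum_{t=0}^{T} \nabla R(x_t) \cdot \frac{\partial x_t}{\partial \theta}$, which is exactly the first displayed equation. This step needs nothing beyond smoothness and linearity of differentiation.

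The core step is to obtain a recursion for $S_t := \frac{\partial x_t}{\partial \theta}$ by differentiating the closed-loop update $x_{t+1} = F\big(x_t, \pi_t^\theta(x_t)\big)$. Writing $G_t := \frac{\partial \pi_t^\theta}{\partial \theta}$ for the \emph{explicit} dependence of the policy on $\theta$, the chain rule yields $S_{t+1} = A_t S_t + B_t\big(K_t S_t + G_t\big) = (A_t + B_t K_t) S_t + B_t G_t = A_t^{cl} S_t + B_t G_t$, with initial condition $S_0 = 0$ because $x_0$ is independent of $\theta$. The one genuine subtlety here is correctly decomposing the total derivative of the policy output $\pi_t^\theta(x_t)$ with respect to $\theta$ into the term $K_t S_t$ (the contribution propagated through the state $x_t$) and the explicit term $G_t$; conflating these two is the most likely source of error.

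Finally I would solve this linear recursion by induction on $t$: unrolling $S_{t+1} = A_t^{cl} S_t + B_t G_t$ from $S_0 = 0$ produces $S_t = \sum_{t'=0}^{t-1} \big( A_{t-1}^{cl} A_{t-2}^{cl} \cdots A_{t'+1}^{cl} \big) B_{t'} G_{t'}$, and identifying the ordered product with the state-transition operator $\Phi_{t,t'} = \prod_{s=t'+1}^{t-1} A_s^{cl}$ (with the empty product set to the identity when $t' = t-1$) reproduces the claimed formula. The main obstacle, to the extent there is one, is purely bookkeeping: since the matrices $A_s^{cl}$ do not commute, I must track the left-to-right ordering and the index range $s = t'+1, \dots, t-1$ carefully, and confirm that the empty-product convention correctly handles the most recent term $B_{t-1} G_{t-1}$.
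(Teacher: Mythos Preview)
Your proposal is correct and follows essentially the same route as the paper: apply the chain rule to the additive cost, differentiate the closed-loop update to obtain the linear sensitivity recursion $S_{t+1} = A_t^{cl} S_t + B_t \frac{\partial \pi_t^\theta}{\partial \theta}$ with $S_0 = 0$, and then unroll it. Your explicit attention to the decomposition of the total derivative of $\pi_t^\theta(x_t)$, the non-commutative ordering of the $A_s^{cl}$ factors, and the empty-product convention is in fact more careful than the paper's presentation.
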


For proof of the result see the supplementary material. The first expression in \eqref{eq:true_grad} calculates the gradient in terms of the sensitivities $\frac{\partial x_t}{\partial \theta}$, while the latter expressions demonstrate how to compute this term using the derivatives of the model and policy. In \eqref{eq:true_grad} the term $\Phi_{t,t'}B_{t'}$ captures how a perturbation to the policy at time $t'$ and state $x_{t'}$ propagates through the closed-loop dynamics to affect the future state at time $t>t'$. As we investigate below, when the robotic system is unstable these terms can grow exponentially large over long time horizons, leading to the exploding gradients phenomenon and the core algorithmic challenges we seek to overcome.

\textbf{Approximating the Policy Gradient Using the Model:} We approximate the policy gradient $\nabla_{\theta}J_T(\theta;x_0)$ using the approximate physics-based model $\hat F$ in \eqref{eq:model_dyn}. Holding $x_0 \in \mathcal{X}$, $\theta \in \Theta$, and the resulting real-world trajectory $\set{x_t}_{t=0}^{T}$, $\{u_t\}_{t=0}^{T-1}$ fixed as above, we denote the derivatives of the \emph{model} along this trajectory as:
\begin{equation}
\hat{A}_t = \frac{\partial}{\partial x}\hat{F}(x_t,u_t),  \ \ \ \ \ \ \ \ \ \ \ \ \ \  \hat{B}_t = \frac{\partial}{\partial u}\hat{F}(x_t,u_t). 
\end{equation}
We can then construct an estimate for $\nabla J_T(\theta;x_0)$ of the form:
\begin{equation}\label{eq:approx_grad}
\widehat{\nabla_\theta J_{T}(\theta;x_0)} =\sum_{t=0}^{T}\nabla R_{t}(x_t) \cdot \widehat{\frac{\partial x_t}{\partial \theta}}, \text{ where } \vspace{-0.3cm}
\end{equation}
\begin{equation*}
\widehat{\frac{\partial x_t}{\partial \theta}} = \sum_{t' = 0}^{t-1} \hat{\Phi}_{t,t'} \hat{B}_{t'} \frac{\partial \pi_{t}^\theta}{\partial \theta}, \ \ \ \ \ \ \hat{\Phi}_{t,t'} := \prod_{s=t'+1}^{t-1} \hat{A}_s^{cl},  \ \ \ \ \ \ \text{and } \hat{A}_t^{cl} = \hat{A}_t + \hat{B}_tK_t. 
\end{equation*}
\begin{remark}
Note that this estimator can be evaluated by $1)$ recording the real-world trajectory which arises when policy $\pi^\theta$  is applied starting from initial state $x_0$, and then $2)$ using the derivatives of the model $\hat F$ to approximate the derivatives of the real-world system along that trajectory. 
Effectively, the only approximation here is of the form $ \Phi_{t,t'} B_{t'} \approx  \hat{\Phi}_{t,t'}\hat{B}_{t'}$ when calculating the estimate of the system sensitivity
$\frac{\partial x_t}{\partial \theta} \approx \widehat{\frac{\partial x_t}{\partial \theta}}$. 
In Sections~\ref{sec:exploding-grads} and \ref{sec:embedding-llfeedback}, we study what causes this approximation to break down over long time horizons, and how properly-structured feedback controllers can help. 
\end{remark}
\begin{remark}
While the policy gradient approximation given by \eqref{eq:approx_grad} will prove convenient for analysis, this formula requires numerous `forwards passes' to propagate derivatives forwards in time along the trajectory. As we demonstrate in the supplementary material, in practice this approximation can be computed more efficiently by `back-propagating through time'. 
\end{remark}

\textbf{Batch Estimation:} 
To approximate the gradient of the overall objective $\nabla \mathcal{J}(\theta)$, we draw $N$ initial conditions $\set{x_0^{i}}_{i=1}^{N}$ independently from the initial state distribution $D$, compute each approximate gradient $\widehat{\nabla J_T(\theta ;x_0^{i})}$ as in \eqref{eq:approx_grad}, and finally compute: \vspace{-0.3cm}
\begin{equation}\label{eq:batch_gradient} 
\nabla \mathcal{J}(\theta) \approx \hat{g}_T^N(
\theta; \set{x_0^i}_{t=0}^{T}): = \frac{1}{N} \sum_{i=1}^{N} \widehat{\nabla J_T(\theta ;x_0^{i})}. 
\end{equation}
We use this estimator in our overall policy gradient algorithm, which is outlined in \cref{alg:batch_update}.

\begin{algorithm}[!t]
    \begin{algorithmic}[1]
    \State{}\textbf{Initialize } Time horizon $T \in \N$, number of samples per update $N \in \N$, number of iterations $K \in \N$, step sizes $\set{\alpha_{k}}_{k=0}^{N-1}$ and initial policy parameters $\theta_1 \in \Theta$
    \For{iterations $k = 1, 2, \dots, K$}
    \State{\textbf{Sample} $N$ initial conditions  $\set{x_0^i}_{i=1}^{N} \sim \mathcal{D}^N$}
    \For{$i = 1,2, \dots, N$}
    \State{\textbf{Unroll} ${x^i = \set{\phi_{t}^{\theta_k}(x_0^i)}_{t=0}^{T}}$ on  \eqref{eq:true_dyn} with $\pi_t^{\theta_k}$} 
    \EndFor 
    \State{\textbf{Estimate} $\hat{g}_{T}^{N}(\theta_k)$ using \eqref{eq:batch_gradient} and trajectories $\set{x^i}_{i=1}^{N}$}
    \State{$\textbf{Update } \theta_{k+1} = \theta_k + \alpha_k \hat{g}_T^N(\theta)$}
    \EndFor
    \end{algorithmic}
      \caption{Policy Learning with Approximate Physical Models}
      \label{alg:batch_update}
\end{algorithm}
\vspace{-0.1cm}
\section{Exploding Gradients: Key Challenges for Unstable Robotic Systems}
\label{sec:exploding-grads}\vspace{-0.1cm}

We now dig deeper into the structure of the policy gradient and our model-based approximation.
We repeatedly appeal to the following scalar linear system to illustrate how key challenges arise: 

\textbf{Running Example:} Consider the case with true and modeled dynamics given respectively by: 
\begin{equation}\label{eq:example}
    x_{t+1} = F(x_t,u_t)= a x_t + b u_t \ \ \ \text{ and } \ \ \ 
    x_{t+1} = \hat{F}(x_t,u_t) =  \hat{a}x_t + \hat{b} u_t,
\end{equation}
where $a,\hat{a},b,\hat{b}>0$ and $x_t,u_t \in \R$. Suppose we optimize over policies of the form $u_t = \pi_t^\theta(x_t) = \bar{u}_t$ where $\theta 
= (\bar{u}_0, \bar{u}_1, \dots, \bar{u}_{T-1}) \in \mathbb{R}^{T}$ are the policy parameters. In this case, the policy parameters $\set{\bar{u}_t}_{t=0}^{T-1}$ specify a sequence of open-loop control inputs applied to the system. Retaining the conventions developed above, along every choice of $\set{\bar{u}_t}_{t=0}^{T-1}$ and the resulting trajectory $\set{x_t}_{t=0}^T$ we have $A_t = a$, $B_t = b$, $\hat{A}_t = \hat{a}$, $\hat{B}_t = \hat{b}$ and $K_t = 0$, and thus we have $\Phi_{t,t'} = a^{t-t'-1}$ and $\hat{\Phi}_{t,t'}= \hat{a}^{t-t'-1}$. When $a, \hat{a}>1$, the system (and model) are \emph{passively unstable} \cite[Chapter 5]{sastry2013nonlinear}, and small changes to the policy compound over time, as captured by
and $\|\Phi_{t,t'}\|$ and $\|\hat{\Phi}_{t,t'}\|$ growing exponentially with the difference $t-t'$, along with the formula for the gradients \eqref{eq:true_grad}.

\subsection{Exploding Model-Bias}
Recall that the aforementioned estimator for $\nabla J_T(\theta;x_0)$ only introduces error in the term $\frac{\partial x_t}{\partial \theta} \approx \widehat{\frac{\partial x_t}{\partial \theta}}$ and in particular $\Phi_{t,t'}B_{t'} \approx \hat{\Phi}_{t,t'}\hat{B}_{t'}$ along the resulting trajectory. We will seek to understand how the point-wise errors in the derivatives of the model $\Delta A_{t}^{cl} := \hat{A}_{t}^{cl} - A_{t}^{cl}$ and $\Delta B_{t} := \hat{B}_{t} -B_t$ propagate over time. Towards this end we manipulate the following difference:
\begin{align}\label{eq:blow_up} 
\hat{\Phi}_{t,t'}\hat{B}_{t'} -  \Phi_{t,t'}B_{t'}  &=\Phi_{t,t'}\hat{B}_{t'} + \Delta \Phi_{t,t'} \hat{B}_{t'} - \Phi_{t,t'}B_{t'} =\Phi_{t,t'} \Delta B_{t'} + \Delta \Phi_{t,t'} \hat{B}_{t'}\\\nonumber
   & = \Phi_{t,t'} \Delta B_{t'} +\big(\sum_{s = t'+1}^{t-1} \Phi_{t,s}\Delta A_{s}^{cl} \hat{\Phi}_{s-1,t'}\big)\hat{B}_{t'}, \label{eq:blow_up}
\end{align}
The last equality in \eqref{eq:blow_up} provides a clear picture of how inaccuracies in the derivatives of the model are propagated over time. For example, when approximating $\hat{\Phi}_{t,t'}\hat{B}_{t,t'} \approx \Phi_{t,t'}B_{t'}$ the error $\Delta B_{t'}$ is magnified by $\Phi_{t,t'}$, while the error $\Delta A_{t'+1}^{cl}$ is magnified by $\Phi_{t,t'+1}$. 

\textbf{Running Example:} Continuing with the scalar example, in this case we have $\Delta B_t = \hat{b} -b$ and  $\Delta A_t^{cl} = \hat{a} -a$. Moreover, using the preceding calculations, we have $ \hat{\Phi}_{t,t'}\hat{B}_{t'} -  \Phi_{t,t'}B_{t'} = a^{t-t'}(\hat{b} -b) +\sum_{s = t'+1}^{t-1} a^{t-s-1}\hat{a}^{s-t'-1}b(\hat{a}-a)$. Thus, when $a,\hat{a}>1$ and the system is unstable, the errors in derivatives of the model are magnified exponentially over long time horizons when computing the sensitivity estimate $\frac{\partial x_t}{\partial \theta}\approx \widehat{\frac{\partial x_t}{\partial \theta}}$ and ultimately the gradient estimate $\nabla J_T(\theta; x_0) \approx \widehat{\nabla J_T(\theta; x_0)}$.
\subsection{Exploding Variance}

We next illustrate how unstable dynamics can lead our batch estimator $\hat{g}_T^N$ to explode over long time horizons $T$ unless a large number of samples $N$ are used. 

\textbf{Running Example:} Consider the case where $r(x_t) = - \frac{1}{2}\|x_t\|_2^2$ and the initial state distribution is $D$ uniform over the interval $[-1, 1]$. Consider the case where we apply $\theta = (\bar{u}_1, \dots, \bar{u}_{T-1}) =(0,\dots, 0)$ so that no control effort is applied. In this case, for every initial condition $x_0$, the resulting state trajectory is given by $x_t = a^t x_0$, and thus our estimate for the gradient is $\nabla J_T(\theta;x_0) = \sum_{t =0}^{T-1}(a^{t}x_0)\cdot \sum_{t'=0}^{t-1} \hat{a}^{t-t}b$. Moreover, by inspection we see that the average of the estimator is $\mathbb{E}[\hat{g}_T^{N}(\theta;\set{x_0}_{i=1}^{N})] =\mathbb{E}\big[\sum_{i=1}^{N}\widehat{J}_T(\theta; x_0)]= 0$ and thus the variance of the estimator is $\frac{1}{N}\mathbb{E}[\|\hat{g}_T^{N}(\theta;\set{x_0}_{i=1}^{N}) - \mathbb{E}\big[\sum_{i=1}^{N}\widehat{J}_T(\theta; x_0)]\|^2] = \frac{1}{N}\mathbb{E}[\|\hat{g}_T^{N}(\theta;\set{x_0}_{i=1}^{N})\|^2] = \frac{1}{N}\|\sum_{t =0}^{T-1}(a^{t}x_0)\cdot \sum_{t'=0}^{t-1} \hat{a}^{t-t'}b\|^2$, a quantity which grows exponentially with the horizon $T>0$.

\subsection{Rapidly Fluctuating Gradients} Let $f \colon \R^q \to \R$ be a potentially non-convex and twice differentiable objective, such as the ones considered in this paper. In this general setting, well-established results for gradient-based methods characterize the rate of convergence to approximate stationary points of the underlying objective, namely, points $z \in \R^q$ such that $\|\nabla f(z)\|_2 < \epsilon$ for some desired tolerance $\epsilon>0$. A key quantity which controls this rate is the smoothness of the underlying objective, which is typically characterized by assuming the existence of a  constant $L>0$ such that $\|\nabla f(z_1) - \nabla f(z_2) \| < L \|z_1 - z_2\|$ for each $z_1,z_2 \in \mathbb{R}^q$. When the constant $L$ is very large, the gradient can fluctuate rapidly, and small step-sizes may be required to maintain the stability of gradient-based methods \cite{bertsekas2000gradient}, slowing the rate of convergence for these approaches. Many analyses control these fluctuations using the Hessian of the objective by setting $L:= \sup_{z \in \R^q} \|\nabla^2 F(z)\|_{i,2}$, where $\| \cdot \|_{i,2}$ is the induced $2$-norm.

Below, our main theoretical results will bound the magnitude of $\nabla^2 J(\theta)$, characterizing the smoothness of the underlying policy optimization problem and illustrating the benefits of embedded low-level controllers. We demonstrate how to derive an expression for the Hessian in the Appendix, but provide here a concrete illustration of how it can grow exponentially for unstable systems:

\textbf{Running Example:} Consider the case where the quadratic reward $r(x_t) = -\frac{1}{2}\|x_t\|_2^2$ is applied to our example scalar system. For every initial condition $x_0$ and choice of policy parameters $\theta = (\bar{u}_1, \dots, \bar{u}_{T-1})$ by inspection we have $x_t = a^{t}x_0 + \sum_{s=0}^{t}a^{t-s}b\bar{u}_s$, so that the overall objective is concave and given by $J(x_0; \theta) = \sum_{t=0}^{T}\sum_{s=0}^{t-1} -\| a^{t}x_0 + \sum_{s=0}^{t}a^{t-s}b\bar{u}_s\|$. The Hessian of the objective can be calculated directly; in particular the diagonal entries are given by $\frac{\partial^2}{\partial \bar{u}_t^2} = \sum_{s=t+1}^{T}a^{s-t}b$. This demonstrates that $\|\nabla^{2} J(x_0,
\theta)\| \geq |\frac{\partial^2}{\partial \bar{u}_t^2}|_2$ grows exponentially in time horizon. From the discussion above, this implies that policy gradient methods will be very slow to converge to optimal policies.

\vspace{-0.1cm}
\section{Embedding Low-Level Feedback into the Policy Class}
\label{sec:embedding-llfeedback}\vspace{-0.1cm}
We now demonstrate how we can overcome the these pathologies by using the model to design stabilizing low-level feedback controllers which are then embedded into the policy class. 

\textbf{Running Example:} Let us again consider the simple scalar system and model we have studied thus far, but now suppose we use the model to design a proportional tracking controller of the form  $u_t= k(\bar{x}_{t}-x_t)$, where $\set{\bar{x}_t}_{t=0}^{T}$ represents a desired trajectory we wish to track and $k>0$ is the feedback gain. We then embed this controller into the overall policy class by choosing the parameters to be $\theta = (\bar{x}_0, \bar{x}_1, \dots, \bar{x}_t)$ so that $u_t = \pi_t^\theta(x_t)= k(\bar{x}_t -x_t)$. Here, the parameters of the control policy specify the desired trajectory the low-level controller is tasked with tracking. In this case, along each trajectory of the system we will now have $A_t^{cl} = a-bk$, $\hat{A}_t^{cl} = \hat{a} - \hat{b} k$, $B_t =b$ and $\hat{B}_t = b$.  If the gain $k>0$ is chosen such that $|a-bk|<1$ and $|\hat{a} - \hat{b}k|<1$, then the transition matrices $\hat{\Phi}_{t,t'}= (\hat{A}_t^{cl})^{t-t'-1}$ and $\Phi_{t,t'}= (A_t^{cl})^{t-t'-1}$ will both decay exponentially with the difference $t-t'$. Thus, by optimizing \emph{through a low-level tracking controller designed with the model} we have reduced the sensitivity of trajectories to changes in the controller parameters.

\begin{remark}
In practice, we may select a control architecture as in \cref{fig:front} where our parameters are those of a neural network which corrects a desired trajectory and low-level controller. The natural generalization of the damping behavior displayed by the proportional controller above is that the low-level controller is \emph{incrementally stabilizing}, which means that for every initial condition $x_0$ and $\theta \in \Theta$ we will have $\|\Phi_{t,t'}\| \leq M\alpha^{t-t'}$. There are many systematic techniques for synthesizing incrementally stabilizing controllers using a dynamical model from the control literature \cite{sastry2013nonlinear,manchester2017control}.
\end{remark}
We are now ready to state our main result, which demonstrates the benefits using the model to design the policy gradient estimator and embedded feedback controller:

\begin{theorem}\label{thm:main_result} Assume that $1)$ the first and second partial derivatives of $R_t$,\ $\pi_{t}^{\theta}$, $F$ and $\hat{F}$ are bounded, $2)$ there exists a constant $\Delta>0$ such that for each $x_0 \in \mathcal{X}$ and $u \in \mathcal{U}$ the error in the model derivatives are bounded by $\max\set{\|\frac{\partial}{\partial x}F(x,u) - \frac{\partial}{\partial x}\hat{F}(x,u)\|, \|\frac{\partial}{\partial u}F(x,u) -\frac{\partial}{\partial u}\hat{F}(x,u)\|} <\Delta$ and $3)$ the policy class $\set{\pi_t^\theta}_{\theta \in \Theta }$ has been designed such that exists constants $M,\alpha>0$ such that for each $x_0 \in \mathcal{X}$, $\theta \in \Theta$, and $t>t'$ we have: $\max\set{\| \Phi_{t,t'}\|, \|\hat{\Phi}_{t,t'}\|} < M\alpha^{t'-t}$. Letting $\bar{g}_T(\theta) = \mathbb{E}[\hat{g}_T^N(\theta; \set{x_0^i}_{i=1}^N)]$ denote the mean of our gradient estimator, there exist scalars $C,W,K>0$ such that the bias and variance of our policy gradient estimator are bounded as follows:
\begin{equation*}
\small   \|\nabla \mathcal{J}_T(\theta) -\bar{g}_T(\theta)\| \leq \begin{cases}
  CT^2\alpha^{T}\Delta & \text{ if } \alpha>1 \\
    CT^2 \Delta & \text{ if } \alpha = 1 \\
    CT\Delta & \text{ if } \alpha <1,
    \end{cases}  \ \ \  \ \ \ \mathbb{E}\bigg[\|\hat{g}_T^N(\theta)-\bar{g}_T(\theta) \|^2 \bigg] \leq \begin{cases}\frac{W T^4\alpha^{2T}}{N} & \text{ if } \alpha>1 \\
    \frac{W T^4}{N}  & \text{ if } \alpha = 1 \\
    \frac{W T^2}{N} & \text{ if } \alpha <1.
    \end{cases}
\end{equation*}
Moreover, the smoothness of the underlying policy optimization problem is characterized via: 
\begin{equation*}
\small \|\nabla^2 \mathcal{J}_T(\theta)\|_2 \leq \begin{cases}
    KT^4\alpha^{3T} & \text{ if } \alpha>1 \\
    KT^4 & \text{ if } \alpha = 1 \\
    KT & \text{ if } \alpha <1.
    \end{cases} 
\end{equation*}
\end{theorem}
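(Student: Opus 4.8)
The plan is to reduce all three bounds to deterministic estimates along a single sampled trajectory and then to propagate Assumptions (1)--(3) through the first- and second-order sensitivity recursions. Because $\mathcal{J}_T(\theta)=\mathbb{E}_{x_0\sim D}[J_T(\theta;x_0)]$ and the batch estimator averages $N$ i.i.d.\ copies of $\widehat{\nabla J_T(\theta;x_0)}$, I would first note that $\nabla\mathcal{J}_T-\bar g_T=\mathbb{E}_{x_0}[\nabla J_T(\theta;x_0)-\widehat{\nabla J_T(\theta;x_0)}]$, so Jensen's inequality bounds the bias by $\sup_{x_0}\|\nabla J_T(\theta;x_0)-\widehat{\nabla J_T(\theta;x_0)}\|$; that $\mathbb{E}[\|\hat g_T^N-\bar g_T\|^2]=\tfrac1N\mathrm{Var}_{x_0}(\widehat{\nabla J_T})\le\tfrac1N\sup_{x_0}\|\widehat{\nabla J_T(\theta;x_0)}\|^2$ by the i.i.d.\ variance identity; and that $\|\nabla^2\mathcal{J}_T\|\le\sup_{x_0}\|\nabla^2 J_T(\theta;x_0)\|$ after differentiating twice under the finite, dominated expectation. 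Throughout, Assumption (1) lets me treat $\|\nabla R\|,\|\nabla^2 R\|,\|\hat B_t\|,\|\partial_\theta\pi_{t'}\|$ and the second derivatives of $F,\hat F,\pi$ as $O(1)$; Assumption (2) gives $\|\Delta B_t\|,\|\Delta A_t^{cl}\|=O(\Delta)$ (the latter via $\Delta A^{cl}=\Delta A+\Delta B\,K$ with $K$ bounded); and Assumption (3) bounds every transition-matrix product by $M$ times $\alpha$ raised to the length of the interval it spans.

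For the bias I would start from \cref{prop:polgrad} and \eqref{eq:approx_grad}, writing the per-trajectory error as $\sum_t\nabla R(x_t)\sum_{t'<t}(\Phi_{t,t'}B_{t'}-\hat\Phi_{t,t'}\hat B_{t'})\,\partial_\theta\pi_{t'}$, and substitute the decomposition \eqref{eq:blow_up}. The crucial point is that in the telescoped term $\sum_{s}\Phi_{t,s}\Delta A_s^{cl}\hat\Phi_{s-1,t'}\hat B_{t'}$ the two transition factors multiply to $M^2\alpha^{t-1-t'}$ \emph{independently of $s$}, i.e.\ they form a single effective chain of length $t-t'$; hence $\|\Phi_{t,t'}B_{t'}-\hat\Phi_{t,t'}\hat B_{t'}\|\le C(t-t')\alpha^{t-t'}\Delta$. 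Summing over $t'$ and then $t$ leaves nested time-sums of the form $\sum_t\sum_{j\le t} j^a\alpha^{j}$, whose evaluation produces the three regimes: for $\alpha>1$ the factor $\alpha^{t-t'}\le\alpha^T$ dominates and the index counting contributes the polynomial $T^2$; for $\alpha=1$ the sums are purely polynomial; and for $\alpha<1$ the series $\sum_j j\alpha^j$ converges to an $O(1)$ constant, collapsing the dependence to the single factor of $T$ from the outer sum.

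For the variance I would bound the per-sample estimator deterministically: $\|\widehat{\partial x_t/\partial\theta}\|\le\sum_{t'<t}\|\hat\Phi_{t,t'}\|\,\|\hat B_{t'}\|\,\|\partial_\theta\pi_{t'}\|\le C\sum_{t'<t}\alpha^{t-t'}$, a single chain that evaluates to $O(\alpha^T)$, $O(T)$, or $O(1)$ (times a polynomial) across the three regimes; contracting with $\nabla R(x_t)$ and summing over $t$ gives a uniform bound $\|\widehat{\nabla J_T(\theta;x_0)}\|=O(\mathrm{poly}(T)\,\alpha^T)$. Squaring this uniform bound and dividing by $N$ yields the stated variance, the exponent $2T$ being exactly twice the single-chain exponent of the gradient.

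The Hessian is the hard part and the main obstacle. Writing $\psi_t:=\partial x_t/\partial\theta$ and $\Psi_t:=\partial^2 x_t/\partial\theta^2$, twice differentiating gives $\nabla^2 J_T=\sum_t[\psi_t^\top\nabla^2 R(x_t)\psi_t+\nabla R(x_t)\cdot\Psi_t]$, so the task is to bound the second-order state sensitivity $\Psi_t$. Differentiating the first-order recursion $\psi_{t+1}=A_t^{cl}\psi_t+B_t\partial_\theta\pi_t$ once more in $\theta$ produces an affine recursion $\Psi_{t+1}=A_t^{cl}\Psi_t+E_t$, whose forcing term $E_t$ collects the chain-rule contributions in which $A_t^{cl}$ and $B_t$ themselves vary with $\theta$ through $x_t$; these are built from the (bounded) second derivatives of $F$ and $\pi_t$ contracted against $\psi_t$, so $\|E_t\|\le C(\|\psi_t\|+1)^2$. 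Solving with the same transition operator gives $\Psi_t=\sum_{s<t}\Phi_{t,s}E_s$ and hence $\|\Psi_t\|\le\sum_{s<t}\|\Phi_{t,s}\|\,(\|\psi_s\|+1)^2$: since $\|\psi_s\|$ already carries one chain, $(\|\psi_s\|+1)^2$ carries two and the outer $\Phi_{t,s}$ a third, which is exactly what yields the exponent $3T$ in the unstable regime, with the prefactor $T^4$ arising from the three nested time-sums and the $\alpha=1$, $\alpha<1$ cases following by the same geometric/polynomial evaluations. The delicate steps are deriving the $\Psi$-recursion correctly and confirming that its forcing is genuinely quadratic in $\psi_t$; once this is established, the three-regime case analysis mirrors the bias argument.
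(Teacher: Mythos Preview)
Your reduction to per–trajectory bounds via Jensen's inequality and the i.i.d.\ variance identity, and your treatment of the bias and variance through the forward sensitivity formulas and the telescoping identity \eqref{eq:blow_up}, are essentially what the paper does (Lemmas~\ref{lem:grad_bound} and~\ref{lem:single_grad}); the key observation that $\|\Phi_{t,s}\|\,\|\hat\Phi_{s-1,t'}\|\le M^2\alpha^{t-1-t'}$ independently of $s$ is exactly the mechanism the paper exploits.

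The Hessian bound is where the two approaches diverge.  You propose to differentiate the first-order sensitivity recursion once more and control the resulting second-order sensitivity $\Psi_t=\partial^2 x_t/\partial\theta^2$ through an affine recursion $\Psi_{t+1}=A_t^{cl}\Psi_t+E_t$ with forcing $\|E_t\|\le C(\|\psi_t\|+1)^2$.  The paper instead introduces the Hamiltonian $H_t(x_t,p_{t+1},\theta)=p_{t+1}F(x_t,\pi_t^\theta(x_t))+R_t(x_t)$ together with the adjoint (co-state) variable $p_t$ satisfying $p_t=p_{t+1}A_t^{cl}+\nabla R_t(x_t)$, and observes that with this choice the second-order state variations $\delta^2 x_t$ drop out of the second variation of $J_T$, leaving
\[
\nabla^2 J_T(\theta;x_0)=\sum_{t}\Big[\psi_t^\top\,\partial_{xx}^2H_t\,\psi_t+2\psi_t^\top\,\partial_{x\theta}^2H_t+\partial_{\theta\theta}^2H_t\Big],
\]
which depends only on the first-order sensitivities $\psi_t$ and the co-states $p_t$.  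Since $\|\partial^2 H_t\|\le C(\|p_{t+1}\|+1)$ and $\|p_t\|$ itself carries a single transition chain, the dominant term $\|\psi_t\|^2(\|p_{t+1}\|+1)$ immediately produces three chains, giving $\alpha^{3T}$ in the unstable case.  Your direct-recursion route reaches the same exponents, but the adjoint trick sidesteps the need to derive and verify the $\Psi$-recursion and its quadratic forcing---the step you correctly flagged as the most delicate.  Conversely, your approach is more self-contained in that it does not require setting up the Lagrangian machinery of Section~\ref{sec:hess_calc}.
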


Proof of the result can be found in the supplementary material. The result formalizes the intuition built with our  example: when the system is passively unstable (and we can have $\alpha>1$), the core algorithmic challenges introduced above can arise. However, embedding a (incrementally stabilizing) low-level tracking controller into the policy class can overcome these pathologies ($\alpha \leq 1$). Note that the condition $\max\set{\| \Phi_{t,t'}\|, \|\hat{\Phi}_{t,t'}\|} < M\alpha^{t'-t}$ in the statement of Theorem 1 requires that the stabilizing controller (which has been designed for the model) is stabilizing the real-world system. This is a reasonable condition, as under mild conditions stabilizing controllers are known to be robust to moderate amounts of model uncertainty \cite{sastry2013nonlinear}. However, it is an interesting matter for future work to characterize the amount of model-mismatch our approach can handle without model-bias exploding over long time horizons.

\section{Experimental Validation}



For each experiment we use a policy structure per \cref{fig:front} which embeds low-level feedback that aims to stably track reference trajectories; a formal definition of this structure is given in \cref{sec:pol-class}.




\textbf{NVIDIA JetRacer:} We begin by hardware-testing our approach on an NVIDIA JetRacer 1/\nth{10} scale high-speed car using the following simplified dynamics model:
\begin{equation}
   \small \label{eqn:cardyn}
    \begin{bmatrix}
        \xnext \\ \ynext \\ \velnext \\ \hanext
    \end{bmatrix} =
    \begin{bmatrix}
        \x + \vel\cos\parens{\ha}\deltat \\
        \y + \vel\sin\parens{\ha}\deltat \\
        \vel + \accel\deltat \\
        \ha + \vel\turnrate\deltat
    \end{bmatrix},
\end{equation}



where $\deltat > 0$ is the discrete time-step, $\parens{\x,\y,\ha}\in SE(2)$ are the Cartesian coordinates and heading angle of the car, $\vel>0$ is the forward velocity of the car in its local frame, and $\parens{\accel,\turnrate}\in\controlspace=\brackets{0,1}\times\brackets{-1,1}$ are the control inputs where $\accel$ is the throttle input percentage and $\turnrate$ is the steering position of the wheels.
We note that this model makes several important simplifications:
\begin{enumerate*}[label=(\roman*)]
  \item drag is significant on the actual car, but is missing from \eqref{eqn:cardyn};
  \item proper scaling of the control inputs $\parens{\accel,\turnrate}$ has been omitted;
  \item the actual car has noticeable steering bias, and does not follow a straight line when $\turnrate = 0$; and
  \item physical quantities such as time-delays in the motor are ignored. 
\end{enumerate*}

\begin{figure}[tb]
     \centering
     \begin{subfigure}{0.44\textwidth}
         \centering
    \includegraphics[width=\textwidth]{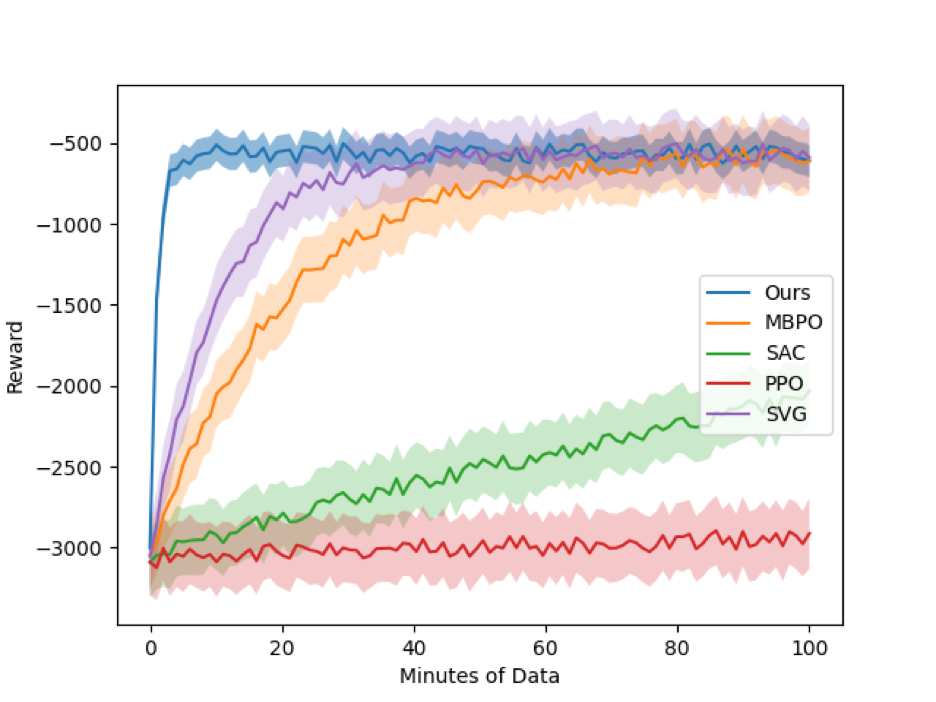}
     \end{subfigure}
     \begin{subfigure}{0.54\textwidth}
         \centering
\includegraphics[width=\textwidth]{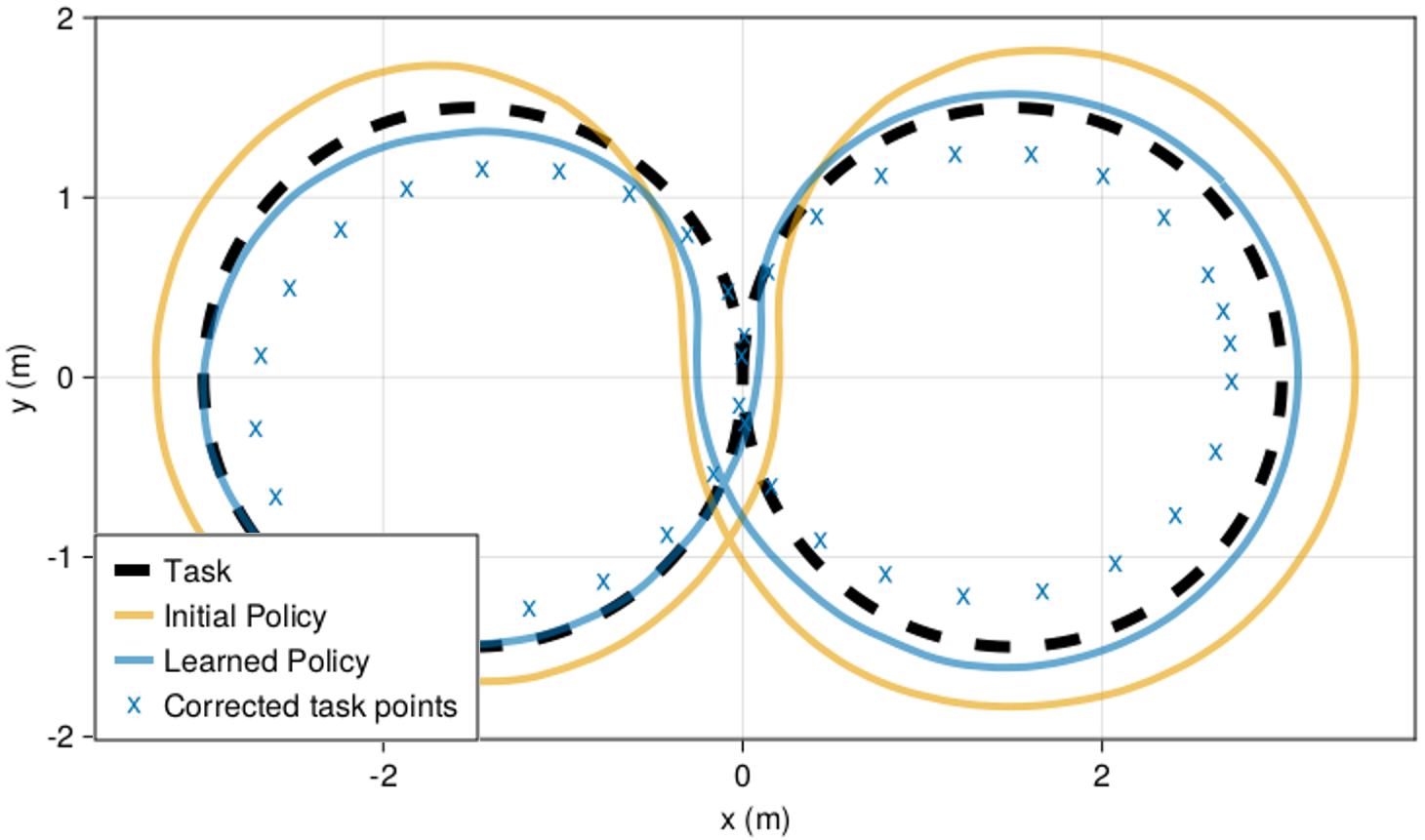}
     \end{subfigure}
    \caption{(Left) Training curves for different algorithms applied to a high-fidelity simulation model of an RC car. (Right) One lap of the quadruped around the figure-8 task with corrected waypoints from a neural network.}
    \label{fig:benchmark-quad-result}
\end{figure}

The task consists of tracking a figure-8 made up of two circles, 3 meters in diameter, with a nominal lap time of \SI{5.5}{\second}.
We implement a backstepping-based tracking controller \cite[Ch. 6]{sastry2013nonlinear} for low-level control. As shown in \cref{fig:front} this controller alone does not ensure accurate tracking, due to inaccuracies in the model used to design it.
We train a policy with \SI{2.2}{\minute} of real world data over 8 iterations, each \SI{16.5}{\second} long and see a clear improvement in tracking performance (\cref{fig:front}).

Next, we use a high fidelity simulator of the car to benchmark our approach against state-of-the-art reinforcement learning algorithms in Figure \ref{fig:benchmark-quad-result}.
All methods optimize over the feedback control architecture described above and therefore were trained using the same action space as our approach.
We compare to the model-based approaches MBPO \cite{janner2019trust} and SVG \cite{heess2015learning} and the model-free approaches SAC \cite{haarnoja2018learning} and PPO \cite{PPO}.
Each of these approaches learns about the dynamics of the system from scratch; thus, it is unsurprising that our approach converges more rapidly as it exploits known physics represented by the model. The use of feedback enables us to take this approach and obtain a high-performing controller, even though the model we use is highly inaccurate, overcoming model-bias. Additional details for the benchmark experiment can be found in Appendix B.

\textbf{Go1 Quadrupedal Robot:}
We also replicate the figure-8 tracking experiment on a Unitree Go1 Edu quadrupedal robot to demonstrate the effectiveness of our approach when using a \emph{highly simplified} model. The Go1 is an 18-degree-of-freedom system which we control in a hierarchical manner. At the lowest level, a joint control module generates individual motor torques to actuate the robot's limbs to desired angles and velocities.
At the next layer, a kinematic solver converts desired foot placements to joint angles.
A gait generation module determines trajectories of foot placements from high-level linear and angular velocity commands issued to the robot. 
We provide these high-level commands to the Go1 via Unitree's ROS interface \cite{unitree}, as outputs from a backstepping-based controller that was formulated using the following simplified dynamical model of the system:
\begin{equation}
  \small \label{eqn:qpeddyn}
    \begin{bmatrix}
        \xnext \\ \ynext \\ \hanext
    \end{bmatrix} =
    \begin{bmatrix}
        \x + \vel\cos\parens{\ha}\deltat \\
        \y + \vel\sin\parens{\ha}\deltat \\
        \ha + \turnrate\deltat
    \end{bmatrix},
\end{equation}
where $(x_t,y_t)$ are the Cartesian coordinates of the base of the robot on the ground plane and $\phi_t$ is its heading. The two inputs to the model are the desired forward velocity $v_t$ and the desired turning rate $w_t$. Note that this is an extremely simplified model for the system, with a dynamic structure similar to the model for the car used in the previous example. 
Setting a nominal lap time of \SI{37.7}{\second},
we trained the policy using \SI{5.9}{\minute} of real-world data over 7 iterations, each \SI{50.9}{\second} long.
Even though we used a highly simplified model for the dynamics, we again see a clear improvement in performance after training (cf. \cref{fig:benchmark-quad-result}).


\section{Limitations}
Our approach successfully learns high-performance control policies using limited data acquired on physical systems. A key enabler to this end is the embedding of stabilizing low-level feedback within the policy class and the use of an \emph{a priori} physics-based model. However, there are several key limitations. First, for situations such as contact-rich manipulation, it may not be clear how to design a controller with the required (incremental) stability property or that can incorporate necessary perceptual observations.  Future work may address this limitations by incorporating techniques for learning stabilizing controllers (e.g., the Lyapunov methods of \cite{kolter2019learning,ravanbakhsh2019learning}) or by working with latent state representations learned from vision modules. Additionally, while our method is highly sample-efficient, it does not take advantage of many established techniques from the reinforcement learning literature, such as value function learning and off policy training, leaving many directions for algorithmic advances. One particularly interesting direction is to combine the proposed approach with emerging model-based reward shaping techniques \cite{westenbroek2020learning, westenbroek2022lyapunov}.

\clearpage


\bibliography{refs}  

\newpage
\appendix
\section{Proofs}
This appendix contains proofs of claims that were omitted in the main document and several supportive Lemmas. 
Section \ref{sec:prop1} provides the derivation for \cref{prop:polgrad}, Section \ref{sec:grad_calc} states and formally derives the reverse-time representation of the gradient, while Section \ref{sec:hess_calc} builds on this calculation to derive a representation for the second variation, which is subsequently use to bound the Hessian. Finally, Section \ref{sec:proofs} contains the auxiliary lemmas. 
\subsection{Proof of \cref{prop:polgrad}}

\label{sec:prop1}
The expression for $\nabla J_T(x_0;\theta)$ follows directly from the chain rule. To obtain the expression for $\frac{\partial x_t}{\partial \theta}$ we differentiate the dynamics $x_{t+1} = F(x_t,u_t)$ to yield: 
\begin{align*}
 \frac{\partial x_{t+1}}{\partial \theta} &=\frac{\partial }{\partial x} F(x_t,u_t) \cdot \frac{\partial x_t}{\partial \theta} + \frac{\partial }{\partial u} F(x_t,u_t) \cdot \frac{\partial u_t}{\partial \theta} \\ &= A_t^{cl}\frac{\partial x_t}{\partial \theta} + B_t \frac{\partial \pi_{t}^\theta}{\partial \theta}, 
\end{align*}
where the second equality is obtained by noting that:
\begin{align*}
    \frac{\partial u_t}{\partial \theta} =  \frac{\partial \pi_t^\theta}{\partial \theta}  + \frac{\partial \pi_t^\theta}{\partial x} \cdot \frac{\partial x_t}{\partial \theta} 
    & =\frac{\partial \pi_t^\theta}{\partial \theta} + K_t \cdot \frac{\partial x_t }{\partial \theta}.
\end{align*}
The desired expression is then obtained by unrolling the recursion and noting that $\frac{\partial x_t}{\partial \theta} =0$.

\subsection{Efficient Backwards Pass for Policy Gradient Computation}\label{sec:grad_calc}

While the form for the policy gradient \eqref{eq:true_grad} and our model-based approximation in \eqref{eq:approx_grad} will prove convenient for analysis, computing the many approximate sensitivity terms $\frac{\partial x_t}{\partial \theta}$---and in particular the $\Phi_{t,t'}$ terms---is highly complex and requires many forwards passes along the trajectory. 
In practice, we can more efficiently compute the approximate gradient as follows:
\begin{proposition}
For each $x_0 \in \mathcal{X}$ and $\theta \in \Theta$ the policy gradient can be calculated via:
\begin{equation}\label{eq:backwards}
\nabla J_T(\theta;x_0) = 
    \sum_{t=0}^{T-1} \big (p_{t+1} B_t \big)\cdot \frac{\partial \pi_t^\theta}{\partial \theta}, \text{ where }
\end{equation}
\begin{equation}
  p_t = p_{t+1}(\hat{A}_t + \hat{B}_tK_t) + \nabla R_t(x_t) \ \ \  \text{ and } \ \ \ p_T = \nabla R_T(x_t).
\end{equation}
\end{proposition}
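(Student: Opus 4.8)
The plan is to derive the reverse-time (adjoint) formula directly from the forward-pass representation in \cref{prop:polgrad}, by interchanging the order of summation and then recognizing the resulting inner sum as the solution of the claimed backward recursion. First I would start from $\nabla J_T(\theta;x_0) = \sum_{t=0}^{T}\nabla R_t(x_t)\cdot\frac{\partial x_t}{\partial\theta}$ and substitute the forward sensitivity $\frac{\partial x_t}{\partial\theta} = \sum_{t'=0}^{t-1}\Phi_{t,t'}B_{t'}\frac{\partial\pi_{t'}^\theta}{\partial\theta}$, producing a double sum over the index set $0\le t'<t\le T$. Interchanging the two summations, so that $t'$ runs over $0,\dots,T-1$ on the outside and $t$ runs over $t'+1,\dots,T$ on the inside, collects the coefficient multiplying each perturbation $B_{t'}\frac{\partial\pi_{t'}^\theta}{\partial\theta}$ into a single row vector. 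Defining this coefficient as $p_{t'+1} := \sum_{t=t'+1}^{T}\nabla R_t(x_t)\,\Phi_{t,t'}$ immediately yields $\nabla J_T = \sum_{t'=0}^{T-1}(p_{t'+1}B_{t'})\cdot\frac{\partial \pi_{t'}^\theta}{\partial\theta}$, which is the claimed gradient expression after relabeling $t'\to t$.

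Next I would verify that $p_t$, defined as this partial sum, satisfies the stated backward recursion. Re-indexing with $t=t'+1$ gives $p_t = \sum_{s=t}^{T}\nabla R_s(x_s)\,\Phi_{s,t-1}$. The terminal condition $p_T = \nabla R_T(x_T)$ follows because $\Phi_{T,T-1}$ is an empty product, hence the identity. For the recursion I would peel off the $s=t$ term, which equals $\nabla R_t(x_t)$ since again $\Phi_{t,t-1}=I$, and then apply the composition identity $\Phi_{s,t-1} = \Phi_{s,t}\,A_t^{cl}$ valid for $s>t$, which is immediate from the definition $\Phi_{s,t'} = \prod_{j=t'+1}^{s-1}A_j^{cl}$ (the factors accumulate with the most recent time step on the left). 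Factoring $A_t^{cl}$ out on the right of the remaining sum identifies what is left as $p_{t+1}A_t^{cl}$, giving $p_t = \nabla R_t(x_t) + p_{t+1}A_t^{cl}$ with $A_t^{cl}=A_t+B_tK_t$, exactly as claimed.

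The argument is purely algebraic, so the main obstacle is index bookkeeping rather than any genuine analytic difficulty: one must track the matrix ordering inside $\Phi_{s,t'}$, keep the off-by-one between the $p_{t+1}$ index and the $A_t^{cl}$ factor consistent, and correctly treat the empty-product base case that supplies both $\Phi_{t,t-1}=I$ and the terminal condition. I would also remark that the identical computation establishes the model-based version stated in the proposition: carrying the hats throughout, i.e. replacing $A_t^{cl}$, $B_t$, and $\Phi_{s,t'}$ by $\hat A_t^{cl}$, $\hat B_t$, and $\hat\Phi_{s,t'}$, reproduces the recursion $p_t = p_{t+1}(\hat A_t + \hat B_t K_t) + \nabla R_t(x_t)$ with $p_T = \nabla R_T(x_T)$ and the adjoint gradient $\widehat{\nabla J_T} = \sum_{t=0}^{T-1}(p_{t+1}\hat B_t)\cdot\frac{\partial\pi_t^\theta}{\partial\theta}$, so the exact and approximate gradients admit the same efficient backward pass.
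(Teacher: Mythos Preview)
Your argument is correct, but it follows a different route from the paper. The paper proves this proposition by a Lagrangian (adjoint) method: it augments the objective with multipliers $p_{t+1}$ on the dynamics constraints $x_{t+1}-F(x_t,\pi_t^\theta(x_t))=0$, defines the Hamiltonian $H_t(x_t,p_{t+1},\theta)=p_{t+1}F(x_t,\pi_t^\theta(x_t))+R_t(x_t)$, computes the first variation $\delta J$ in terms of $\delta x_t$ and $\delta\theta$, and then \emph{chooses} the costates via $p_T=\nabla R_T(x_T)$ and $p_t=\nabla_x H_t$ so that every $\delta x_t$ term vanishes; what remains is exactly $\sum_t (p_{t+1}B_t)\tfrac{\partial\pi_t^\theta}{\partial\theta}$. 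Your proof instead starts from the forward-pass representation of \cref{prop:polgrad}, swaps the order of the double sum over $(t,t')$, reads off the coefficient of $B_{t'}\tfrac{\partial\pi_{t'}^\theta}{\partial\theta}$ as the definition of $p_{t'+1}$, and then verifies the backward recursion by peeling off the bottom term and using $\Phi_{s,t-1}=\Phi_{s,t}A_t^{cl}$.

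Your approach is more elementary and makes the equivalence with \cref{prop:polgrad} completely transparent; it also cleanly handles both the exact and hatted versions in one stroke, as you note. The paper's variational approach buys something else: the same Lagrangian setup is reused verbatim in the following subsection to derive the Hessian $\nabla^2 J_T(\theta;x_0)$, where the costate choice now annihilates the second-order state variations $\delta^2 x_t$ and leaves a tractable quadratic form in $\tfrac{\partial x_t}{\partial\theta}$. Extending your summation-interchange argument to the Hessian would be possible but considerably messier, so the paper's method is better suited to the downstream analysis it needs.
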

Here, the recursion with the variables $p_t \in \mathbb{R}^{1 \times n}$ performs `back propagation through time' along the real-world trajectory using the derivatives of the model. 

\begin{proof}
As before, let $\{x_t\}_{t=0}^T$ and $\{u_t\}_{t=0}^{T-1}$ denote the state trajectory that results from applying the policy $\pi_\theta$ from $x_0$. 

 Permitting a slight abuse of notation, we can re-write the cost by moving the dynamics constraints into the cost and weighting them with Lagrange multipliers: 
\begin{equation}
    J(\theta;x_0) =\sum_{i=0}^{T-1} R_t(x_t) + p_{t+1}\big(x_{t+1} -F(x_t,\pi_\theta^t(x_t)) \big)
\end{equation}
Define the Hamiltonian 
\begin{equation}\label{eq:hamiltonian}
    H_t(x_t,p_{t+1},\theta) = p_{t+1}F(x_t,\pi_\theta^t(x_t)) + 
    R_t(x_t),
\end{equation}
and note that we may then re-write the cost as:
\begin{equation}\label{eq:nice_form}
    J(\theta;x_0) = R_T(x_T) + \langle p_T,x_T\rangle + \langle p_0, x_0 \rangle + \sum_{t=0}^{T-1} p_{t} x_{t}-  H_t(x_t,p_{t+1},\theta)
\end{equation}
To reduce clutter below we will frequently omit the arguments from $H_t$, since it is clear that the map is evaluated at $(x_t,p_{t+1},\theta)$.
Let $\delta \theta \in \R^p$ be a variation on the policy parameters and let $\delta x_t = \frac{\partial \phi_{\theta}^t}{\partial \theta}\delta \theta$ denote the corresponding first variation of the state. To first order, the change in the cost corresponding to these variations is:
\begin{equation}\label{eq:first_variation}
    \delta J|_{\theta}(\delta \theta) = \langle \nabla Q_T(x_T) + p_T, \delta x_T \rangle + \sum_{t=0}^{T-1}\langle p_t- \nabla_{x} H_t, \delta x_t \rangle - \langle \nabla_\theta H_t, \delta \theta \rangle.
\end{equation}
To simplify the expression, let us make the following choices for the multipliers:
\begin{equation}
    p_T= \nabla R_T(x_T)
\end{equation}
\begin{align}
    p_{t}^\top &= \nabla_{x}H_t(x_t,p_{t+1},\theta)\\
    &= p_{t+1}^\top \frac{\partial}{\partial x}F(x,\pi_\theta^t(x)) + \nabla R_t(x_t)  \\
    &=p_{t+1}^\top \frac{\partial}{\partial x}A_t + \nabla R_t(x_t) 
\end{align}
where we have applied the short-hand from developed in Section \ref{sec:formulation} for the particular task. 
Plugging this choice for the multipliers into \eqref{eq:first_variation} causes the $\delta x_t$ terms to vanish and yields:
\begin{align}
    \delta J|_{\theta}(\delta\theta) &=  \sum_{t=0}^{t-1} \langle \nabla_{\theta}H_t,\delta \theta \rangle \\
    & = \langle p_{t+1}^\top \frac{\partial}{\partial u}F(x,\pi_{\theta}^t)\frac{\partial \pi_{\theta}^t}{\partial \theta} + \nabla R_t(x_t) \frac{\partial \pi_{\theta}^t}{\partial \theta}, \delta\theta \rangle \\
    &= \sum_{t=0}^{T-1}\langle p_{t+1}^\top B_t + r_t, \frac{\partial \pi_{\theta}^t}{\partial \theta}\delta \theta\rangle 
\end{align}
Since this calculation holds for arbitrary $\delta \theta$ this demonstrates that the gradient of the objective is given by:
\begin{equation}
    \nabla_{\theta}J(\theta,x_0) =\sum_{t=0}^{T-1}\langle p_{t+1}^\top B_t + r_t, \frac{\partial \pi_{\theta}^t}{\partial \theta} \rangle.
\end{equation}
\end{proof}
\subsection{Calculating the Second Variation}\label{sec:hess_calc}
To calculate the Hessian of the objective we continue the Lagrange multiplier approach discussed above. Now let $\delta^2 x_t $  denote the second order variation in the state with respect to the perturbation $\delta \theta$. By collecting second order terms in \eqref{eq:nice_form} the attendant second-order variation to the cost is given by: 
\begin{align}
    \delta^2 J|_{\theta}(\delta \theta) &= \langle \delta x_t^\top \nabla^2 R_T(x_T),  \delta x_t \rangle + \langle \nabla R_T(x_T) + p_T,\delta^2 x_T \rangle \\ \nonumber
    &+ \sum_{t=0}^{T-1} \bigg( \langle p_t - \nabla_x H_t, \delta^2 x_t\rangle + \langle \delta x_t^\top \nabla_{xx}^2 H_t(x_t), \delta x_t\rangle  \\
    & \ \ \ \ \ + 2\langle \delta x_t \nabla_{x\theta}^2 H_t, \delta \theta \rangle + \langle\delta \theta^\top \nabla^2_{\theta \theta}H_t,\delta \theta \rangle \bigg) 
\end{align}
By using the choice of costate introduced above, this time the second order state variations $\delta^2 x_t$ vanish from this expression so that we arrive at:
\begin{align}
    \delta^2J|_{\theta}(\delta \theta) &= 
    + \sum_{t=0}^{T-1} \langle \delta x_t^\top \nabla_{xx}^2 H_t(x_t), \delta x_t\rangle + 2\langle \delta x_t \nabla_{x\theta}^2 H_t, \delta \theta \rangle + \langle\delta \theta^\top \nabla_{\theta \theta}^2 H_t,\delta \theta \rangle.
\end{align}
By unravelling this expression, we observe that:
\begin{align*}
  \nabla^2 J_T(\theta;x_0) &=  \big(\frac{\partial x_T}{\partial \theta}\big)^\top \cdot \nabla^2 R_T(x_T) \cdot \frac{\partial x_T}{\partial \theta}  \\\nonumber
    & + \sum_{t=0}^{T-1} \big(\frac{\partial x_t}{\partial \theta}\big)^\top \cdot \frac{\partial^2}{\partial x ^2}H_t(x_t,p_t,\theta) \cdot \frac{\partial x_t}{\partial \theta}  \\\nonumber
    &+ 2\sum_{t=0}^{T-1} \big(\frac{\partial x_t}{\partial \theta}\big)^\top \cdot \frac{\partial^2}{\partial x \partial \theta}H_t(x_t,p_{t+1},\theta) \\\nonumber
    &+ \sum_{t=0}^{T-1} \frac{\partial^2}{ \partial \theta ^2}H_t(x_t,p_{t+1},\theta),
\end{align*}
which, for the purposes of our analysis, we note does not depend on second variations of the state.
\subsection{Restatement of Main Result and Proof}
\label{sec:main-result-proof}

\textbf{Theorem 1.} Assume that the first and second partial derivatives of $R_t$,\ $\pi_{t}^{\theta}$, $F$ and $\hat{F}$ are bounded. Further assume that there exists a constant $\Delta>0$ such that for each $x_0 \in \mathcal{X}$ and $u \in \mathcal{U}$ the error in the model derivatives are bounded by $\max\set{\|\frac{\partial}{\partial x}F(x,u)\|, \|\frac{\partial}{\partial u}F(x,u)\|} <\Delta$. Finally, assume that the policy class $\phi_t^\theta$ has been designed such that exists constants $M,\alpha>0$ such that for each $x_0 \in \mathcal{X}$, $\theta \in \Theta$, and $t>t'$ we have: $\max\set{\| \Phi_{t,t'}\|, \|\hat{\Phi}_{t,t'}\|} < M\alpha^{t'-t}$. Then we may bound the bias and variance of our policy gradient estimator as follows:
\begin{equation*}
\small   \|\nabla \mathcal{J}_T(\theta) -\bar{g}_T(\theta)\| \leq \begin{cases}
  CT^2\alpha^{T}\Delta & \text{ if } \alpha>1 \\
    CT^2 \Delta & \text{ if } \alpha = 1 \\
    CT\Delta & \text{ if } \alpha <1,
    \end{cases}  \ \ \  \ \ \ \mathbb{E}\bigg[\|\hat{g}_T^N(\theta)-\bar{g}_T(\theta) \|^2 \bigg] \leq \begin{cases}\frac{W T^4\alpha^{2T}}{N} & \text{ if } \alpha>1 \\
    \frac{W T^4}{N}  & \text{ if } \alpha = 1 \\
    \frac{W T^2}{N} & \text{ if } \alpha <1.
    \end{cases}
\end{equation*}
Moreover, the smoothness of the underlying policy optimization problem is characterized via: 
\begin{equation*}
\small \|\nabla^2 \mathcal{J}_T(\theta)\|_2 \leq \begin{cases}
    KT^4\alpha^{3T} & \text{ if } \alpha>1 \\
    KT^4 & \text{ if } \alpha = 1 \\
    KT & \text{ if } \alpha <1.
    \end{cases} 
\end{equation*}
\begin{proof}
We first bound the bias of the gradient:
\begin{align*}
 \|\nabla \mathcal{J}_T(\theta) -\bar{g}_T(\theta)\| &= \|\mathbb{E}[\nabla J_T(\theta;x_0) - \hat{g}_T(\theta;x_0)] \| \\
 & \leq \mathbb{E}[\|\nabla J_T(\theta;x_0) - \hat{g}_T(\theta;x_0) \|] \\
 &\leq \sup \|\nabla J_T(\theta;x_0) -\hat{g}_T(\theta;x_0)\|,
\end{align*}
where the preceding expectations are over $x_0\sim D$. The desired bound on the bias directly follows by applying the bound on gradient errors from Lemma \ref{lem:single_grad}  below.

Next, to bound the variance estimate note that:
\begin{align*}
\mathbb{E}[\|\hat{g}_T^N(\theta)-\bar{g}_T(\theta) \|^2] &=\textstyle{\frac{1}{N^2}}\sum_{i=1}^{N}\mathbb{E}[\|\hat{g}_T(\theta;x_0) -\bar{g}_T(\theta)\|^2] \\
&\leq \textstyle{ \frac{1}{N}  \sup \|\hat{g}_T(\theta;x_0) -\bar{g}_T(\theta) \|^2 }\\
&\leq\textstyle{ \frac{4}{N} \sup \|\hat{g}_T(\theta;x_0)\|^2},
\end{align*}
where the first expectation is over $(x_0^i)_{i=1}^{N} \sim \mathcal{D}^N$, the second is with respect $(x_0)\sim \mathcal{D}$. The desired bound on the variance follows via a direct application of Lemma \ref{lem:grad_bound} which provides a uniform upper-bound on the gradient estimates. 

Similar to before we have:
\begin{align*}
    \| \nabla^2 \mathcal{J}_T(\theta)\| &\leq \mathbb{E}_{(x_0)\sim \mathcal{D}}[\|\nabla^2 J_T(\theta;x_0)\|] \\
    &\leq \sup_{(x_0) \in D} \|\nabla^2 J_T(\theta;x_0) \|.
\end{align*}
The desired bound follows from Lemma \ref{lem:hessian}, which uniformly bounds the task-specific Hessians. 
\end{proof}

\subsection{Supportive Lemmas}\label{sec:proofs}

\begin{lemma}\label{lem:grad_bound} Let the Assumptions of Theorem \ref{thm:main_result} hold. Then there exists $\beta>0$ independent of the parameters $T \in \N$, $M$ and $\alpha \in \R$ such that for each $x_0 \in D$ and $\theta \in \Theta$ we have: \begin{equation*}
 \small   \|\nabla_{\theta} J_T(\theta;x_0)\| \leq \begin{cases}
    \beta T^2\alpha^{T} & \text{ if } \alpha>1 \\
    \beta T^2  & \text{ if } \alpha = 1 \\
    \beta T & \text{ if } \alpha <1.
    \end{cases} 
\end{equation*}
\end{lemma}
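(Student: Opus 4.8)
The plan is to bound the gradient directly from the closed form in \cref{prop:polgrad}, pushing the norm through the double summation and replacing every factor by its uniform bound. Substituting the sensitivity expression into the gradient yields
\[
\nabla_\theta J_T(\theta;x_0) = \sum_{t=0}^{T}\sum_{t'=0}^{t-1} \nabla R(x_t)\,\Phi_{t,t'}\,B_{t'}\,\frac{\partial \pi_{t'}^\theta}{\partial \theta},
\]
a sum over the $O(T^2)$ index pairs $0 \le t' < t \le T$. The boundedness of the first partials of $R_t$, $\pi_t^\theta$ and $F$ (Assumption 1) supplies uniform constants controlling $\|\nabla R(x_t)\|$, $\|\partial \pi_{t'}^\theta/\partial\theta\|$ and $\|B_{t'}\|$, while Assumption 3 gives $\|\Phi_{t,t'}\| \le M\alpha^{t-t'}$. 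First I would collect these into a single constant $\beta_0$ (depending on the derivative bounds and on $M$), so that every summand is dominated by $\beta_0\,\alpha^{t-t'}$. This reduces the estimate to controlling the scalar double sum $S := \sum_{t=1}^{T}\sum_{j=1}^{t}\alpha^{j}$, obtained via the change of index $j = t-t'$.

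Next I would evaluate $S$ in the three regimes. For $\alpha<1$ the inner geometric sum is bounded by the convergent tail $\alpha/(1-\alpha)$, independent of $t$, so $S = O(T)$. For $\alpha = 1$ the inner sum is exactly $t$, giving $S = \sum_{t=1}^T t = O(T^2)$. For $\alpha>1$ the decisive move is the crude term-wise bound $\alpha^{j} \le \alpha^{T}$, valid for all $j \le T$: the inner sum is then at most $t\,\alpha^T \le T\alpha^T$, and summing over the $T$ outer indices gives $S \le T^2\alpha^T$. Multiplying by $\beta_0$ and relabeling the constant as $\beta$ produces the three claimed cases.

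Finally, I would note that the single-sample estimator $\hat g_T(\theta;x_0)$ has the identical structural form with $\Phi_{t,t'},B_{t'}$ replaced by $\hat\Phi_{t,t'},\hat B_{t'}$. Since Assumption 3 bounds $\|\hat\Phi_{t,t'}\|$ by the same $M\alpha^{t-t'}$ and Assumption 1 bounds $\|\hat B_{t'}\|$, the verbatim computation yields the same upper bound on $\|\hat g_T(\theta;x_0)\|$; this is in fact the form invoked in the variance step of the proof of \cref{thm:main_result}.

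I expect the only delicate point to be the $\alpha>1$ regime. The temptation is to sum the geometric series exactly, but that produces an $\alpha$-dependent prefactor of order $(\alpha/(\alpha-1))^2$ that degenerates as $\alpha \downarrow 1$ and obscures the clean polynomial-in-$T$ statement. Using instead the uniform bound $\alpha^{j} \le \alpha^{T}$ is what simultaneously decouples the two summations and produces the advertised $T^2\alpha^T$ rate with a constant that does not grow with $T$.
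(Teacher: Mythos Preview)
Your argument is correct and arrives at the same bounds via the same geometric-sum estimates; the only organizational difference is that the paper works from the backward costate representation $\nabla J_T = \sum_{t} p_{t+1}B_t\,\partial\pi_t^\theta/\partial\theta$ and first bounds $\|p_t\|$ in the three regimes (obtaining $C_1T\alpha^T$, $C_1T$, $C_1$), then sums once more over $t$, whereas you attack the double sum from \cref{prop:polgrad} directly. The two are equivalent reorderings of the same $\sum_{t}\sum_{t'<t}\alpha^{t-t'}$ computation; the paper's route has the minor side benefit that the intermediate costate bound \eqref{eq:costate_bound} is reused verbatim in the Hessian estimate (Lemma~\ref{lem:hessian}), but your version is self-contained and equally valid for the present lemma.
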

\begin{proof}
Let the constant $L>0$ be large enough so that it upper-bounds the norm of the first and second partial derivatives of $R_t$,$\pi_{t}^{\theta}$, $F$ and $\hat{F}$. Fix a specific task $x_0$ and set of policy parameters $\theta$ and let $A_t,B_t,K_t$ be defined along the corresponding trajectory as usual. 

Recall from Section \ref{sec:formulation} that
\begin{align*}
    \nabla J_T(\theta;x_0) = 
    \sum_{t=0}^{T-1} \big (p_{t+1} B_t + \nabla R(x_t)\big)\cdot \frac{\partial \pi_t^\theta}{\partial \theta},
\end{align*}
where the \emph{co-state} $p_t \in \R^{1\times n}$ is given by:
\begin{equation*}
 p_t =\sum_{s = t+1}^{T-1} \nabla R(x_t) \cdot \Phi_{s,t},
 \end{equation*}
 by inspection. 
 Thus, we may upper-bound the growth of the co-state as follows: 
 \begin{equation}
 \|p_{t}\| \leq L M\alpha^{T-t} + \sum_{s = t+1}^{T-1}(L+L^2)M\alpha^{s-t}\,.
 \end{equation}
 By carrying out the summation, we observe that there exists $C_1>0$ sufficiently large such that
 \begin{equation}\label{eq:costate_bound}
 \|p_t\| \leq \begin{cases}
 C_1 T  \alpha^T & \text{ if } \alpha >1\\
 C_1 T  & \text{ if } \alpha =1\\
C_1  & \text { if } \alpha <1,
 \end{cases}
 \end{equation}
 where we have used the fact that $\sum_{s = t+1}^{T-1}M\alpha^{s-t}< M \frac{1}{1-\alpha}$ for the third case. We can bound the overall gradient as follows:
 \begin{equation}
  \| \nabla J_T(\theta;x_0)\| = 
    \sum_{t=0}^{T-1} L \big(L\|p_{t+1}\| + L \big),
\end{equation}
which when combined with the bound on the costate above demonstrates the desired result for some constant $\beta>0$ sufficiently large to cover all choices of $x_0$. 
\end{proof}

\begin{lemma}\label{lem:single_grad}
 Let the Assumptions of Theorem \ref{thm:main_result} hold.  Then there exists $C>0$ independent of $T \in \N$, $M,\Delta_A,\Delta_B>0$ and  $\alpha > 0$such that for each $x_0 \in D$ and $\theta \in \Theta$ we have: \small \begin{equation*}
    \|\nabla_{\theta}J_T(\theta;x_0) -\hat{g}_T(\theta;x_0)\| \leq \begin{cases}
    CT^3\alpha^{T}\Delta & \text{ if } \alpha>1 \\
    CT^3 \Delta & \text{ if } \alpha = 1 \\
    CT^2\Delta & \text{ if } \alpha <1,
    \end{cases} 
\end{equation*}\normalsize
where $\Delta = \min\set{\Delta_A,\Delta_B}$.
\end{lemma}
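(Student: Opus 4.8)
The plan is to reduce the gradient error to the error in the sensitivity terms and then propagate the pointwise derivative errors using the decomposition already recorded in \eqref{eq:blow_up}. Since $\nabla J_T$ and $\hat g_T$ are both evaluated along the \emph{same} real-world trajectory, they share the reward gradients $\nabla R(x_t)$, so subtracting \eqref{eq:approx_grad} from \eqref{eq:true_grad} gives
\[
\nabla_\theta J_T(\theta;x_0) - \hat g_T(\theta;x_0) = \sum_{t=0}^{T} \nabla R(x_t)\cdot\Big(\tfrac{\partial x_t}{\partial\theta} - \widehat{\tfrac{\partial x_t}{\partial\theta}}\Big).
\]
Using $\|\nabla R(x_t)\|\le L$ and $\|\partial\pi_t^\theta/\partial\theta\|\le L$ (from the bounded-derivative hypothesis), it therefore suffices to bound each propagated error $\|\hat\Phi_{t,t'}\hat B_{t'} - \Phi_{t,t'}B_{t'}\|$ and then sum over $t'$ and $t$.

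First I would bound a single propagated error via \eqref{eq:blow_up}. The three ingredients are: (i) $\|\Delta B_{t'}\|\le\Delta$; (ii) the closed-loop derivative error satisfies $\Delta A_s^{cl} = \Delta A_s + \Delta B_s K_s$, whence $\|\Delta A_s^{cl}\|\le \Delta + L\Delta \le C_A\Delta$ using $\|K_s\|\le L$; and (iii) the crucial telescoping $\|\Phi_{t,s}\|\,\|\hat\Phi_{s-1,t'}\|\le M^2\alpha^{(t-s)+(s-1-t')} = M^2\alpha^{t-t'-1}$, in which the intermediate index $s$ cancels. Substituting into \eqref{eq:blow_up} yields
\[
\big\|\hat\Phi_{t,t'}\hat B_{t'} - \Phi_{t,t'}B_{t'}\big\| \le M\alpha^{t-t'}\Delta + (t-t'-1)\,M^2 C_A L\,\alpha^{t-t'-1}\Delta,
\]
which, in the $\alpha\ge1$ regime, I would further collapse to $C_1\,(t-t')\,\alpha^{t-t'}\Delta$ with $C_1$ absorbing $M$, $L$, and $C_A$.

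Finally I would accumulate the two remaining sums, treating the regimes separately and exactly as in \cref{lem:grad_bound}. For $\alpha\ge1$ I would use the crude ``number-of-terms times largest-term'' estimate: each of the $\le T$ inner terms is at most $C_1 T\alpha^T\Delta$, so the sensitivity error is at most $C_2 T^2\alpha^T\Delta$ (with $\alpha^T$ replaced by $1$ when $\alpha=1$), and summing the $\le T+1$ outer terms gives the claimed $C_3 T^3\alpha^T\Delta$ (resp. $C_3 T^3\Delta$). For $\alpha<1$ I would instead keep the geometric factor convergent: bounding the counting factor $(t-t')$ crudely by $T$ but using $\sum_{j\ge1}\alpha^j<\tfrac{1}{1-\alpha}$ leaves the inner sum $O(T\Delta)$, and the outer sum over $t$ contributes a further factor $T$, producing $C\,T^2\Delta$ (the $\tfrac{1}{1-\alpha}$-type constant is absorbed into $C$, consistent with how the $\alpha<1$ case is handled in \cref{lem:grad_bound}). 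Collecting the three cases gives the stated bound with a single $C$ chosen large enough to cover all $x_0\in D$.

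The main obstacle, and the step that makes the whole argument go through, is sub-step (iii): without the cancellation of the intermediate index $s$ in the product $\Phi_{t,s}\hat\Phi_{s-1,t'}$, each of the $O(T)$ terms inside \eqref{eq:blow_up} could only be bounded by $M^2\alpha^{2T}$ rather than $M^2\alpha^{t-t'-1}$, inflating the final estimate by an \emph{extra} factor of order $\alpha^{T}$ and thereby destroying the polynomial-in-$T$ prefactor. Verifying this telescoping, together with the uniform bound $\|K_s\|\le L$ that lets $\Delta A^{cl}$ inherit the $O(\Delta)$ derivative-error scale, is where the care is required; the remaining geometric-sum bookkeeping across the three $\alpha$-regimes is routine once the crude estimates of \cref{lem:grad_bound} are matched.
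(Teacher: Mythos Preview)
Your proposal is correct and follows essentially the same route as the paper's proof: subtract \eqref{eq:true_grad} and \eqref{eq:approx_grad} along the common real-world trajectory, invoke the decomposition \eqref{eq:blow_up}, telescope the exponents in $\|\Phi_{t,s}\|\,\|\hat\Phi_{s-1,t'}\|$, and then carry out the same geometric-sum bookkeeping across the three $\alpha$-regimes. If anything you are slightly more careful than the paper in two places---you make explicit that $\Delta A_s^{cl}=\Delta A_s+\Delta B_s K_s$ so that $\|\Delta A_s^{cl}\|\le(1+L)\Delta$ via $\|K_s\|\le L$, and you track the exponent $t-t'-1$ rather than $t-t'$ after the telescoping---but these are cosmetic refinements of the same argument.
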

\begin{proof}
Let the constant $L>0$ be large enough so that it upper-bounds the norm of the first and second partial derivatives of $R_t$,\ $\pi_{t}^{\theta}$, $F$ and $\hat{F}$.
Fix a specific task $x_0$ and set of policy parameters $\theta$ and let $A_t,B_t,K_t$ as usual.

Using equations \eqref{eq:approx_grad} and \eqref{eq:blow_up} we obtain:
\begin{align*}
\| \nabla J_T(\theta;x_0) - \hat{g}_T(\theta, x_0)\|&=  \| \sum_{t=1}^{T } \nabla R(x_t) \cdot  \sum_{t'=0}^{t}  (\Phi_{t,t'}B_{t'} - \hat{\Phi}_{t,t'}\hat{B}_{t'}) \| \\
&\leq  \sum_{t=1}^{T }  \|\nabla R(x_t)\| \cdot  \sum_{t'=0}^{t} \| \Phi_{t,t'} \Delta B_{t'} + \big(\sum_{s = t'+1}^{t-1} \Phi_{t,s}\Delta A_{s}^{cl} \hat{\Phi}_{s-1,t'}\big)\hat{B}_{t'}\|\\
& \leq  \sum_{t=1}^{T }   L  \sum_{t'=0}^{t} \Big( M \alpha^{t-t'}\Delta + \big(\sum_{s = t'+1}^{t-1} M \alpha^{t-s}\Delta M \alpha^{s-t'}\big)L \Big)\,.
\end{align*}
Note that the preceding analysis holds for any choice of $\theta$ and $x_0$. Thus, noting that $${\sum_{s = t'+1}^{t-1} M \alpha^{t-s}\Delta M \alpha^{s-t'} < M^2 \frac{1}{1-\alpha} \Delta}$$ in the case where $\alpha <1$, leveraging the preceding inequality we can easily conclude that there exists $C>0$ sufficiently large such that for each  $\theta$ and $x_0$ we have:
\begin{equation*}
    \|\nabla_{\theta}J_T(\theta;x_0) -\hat{g}_T(\theta;x_0)\| \leq \begin{cases}
    CT^3\alpha^{T}\Delta & \text{ if } \alpha>1 \\
    CT^3 \Delta & \text{ if } \alpha = 1 \\
    CT^2\Delta & \text{ if } \alpha <1,
    \end{cases} 
\end{equation*}
which demonstrates the desired result. 
\end{proof}

\begin{lemma}\label{lem:hessian}
 Let the Assumptions of Theorem \ref{thm:main_result} hold. Then there exists $K>0$ independent of $T \in \N$, $M$ and $\alpha \in \R$ such that for each $x_0 \in D$ and $\theta \in \Theta$ we have: 
\begin{equation*}
    \|\nabla_{\theta}^2 J_T(\theta;x_0)\| \leq \begin{cases}
    KT^4\alpha^{3T} & \text{ if } \alpha>1 \\
    KT^4 & \text{ if } \alpha = 0 \\
    KT & \text{ if } \alpha <1.
    \end{cases} 
\end{equation*} 
\end{lemma}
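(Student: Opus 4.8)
The plan is to begin from the closed-form expression for $\nabla^2 J_T(\theta;x_0)$ derived in Section~\ref{sec:hess_calc}, which writes the Hessian as a sum of four blocks: a boundary term $(\frac{\partial x_T}{\partial\theta})^\top \nabla^2 R_T(x_T)\frac{\partial x_T}{\partial\theta}$, an ``$xx$'' term $\sum_t (\frac{\partial x_t}{\partial\theta})^\top \frac{\partial^2 H_t}{\partial x^2}\frac{\partial x_t}{\partial\theta}$, a mixed ``$x\theta$'' term, and a ``$\theta\theta$'' term $\sum_t \frac{\partial^2 H_t}{\partial\theta^2}$. Every block is assembled from three ingredients whose growth I can control separately: the state sensitivity $\frac{\partial x_t}{\partial\theta}$, the costate $p_{t+1}$, and the second derivatives of the Hamiltonian $H_t = p_{t+1}F + R_t$. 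The strategy is to bound each ingredient uniformly in $x_0$ and $\theta$ using the stability assumption, substitute, and carry out the resulting geometric sums exactly as in Lemmas~\ref{lem:grad_bound} and \ref{lem:single_grad}.

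First I would bound the sensitivity. Proposition~\ref{prop:polgrad} gives $\frac{\partial x_t}{\partial\theta} = \sum_{t'=0}^{t-1}\Phi_{t,t'}B_{t'}\frac{\partial\pi_t^\theta}{\partial\theta}$, so the assumption $\|\Phi_{t,t'}\|\le M\alpha^{t-t'}$ together with $\|B_{t'}\|,\|\frac{\partial\pi}{\partial\theta}\|\le L$ yields the crude uniform bound $\|\frac{\partial x_t}{\partial\theta}\| \le C_0$ when $\alpha<1$, $\le C_0 T$ when $\alpha=1$, and $\le C_0 T\alpha^T$ when $\alpha>1$ --- exactly the case structure of the costate bound \eqref{eq:costate_bound} established inside the proof of Lemma~\ref{lem:grad_bound}, which I would simply reuse for the $p_{t+1}$ factor. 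Next, since $H_t = p_{t+1}F(x,\pi_\theta^t(x)) + R_t$, differentiating twice in $(x,\theta)$ produces second derivatives of the composition $F\circ\pi$ (each carrying exactly one factor of $p_{t+1}$) plus the Hessian of $R_t$; boundedness of all first and second partials by $L$ then gives $\max\{\|\frac{\partial^2 H_t}{\partial x^2}\|, \|\frac{\partial^2 H_t}{\partial x\partial\theta}\|, \|\frac{\partial^2 H_t}{\partial\theta^2}\|\} \le C_1(\|p_{t+1}\|+1)$.

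Substituting these bounds and applying the triangle inequality and submultiplicativity block by block, the dominant contribution in every regime is the $xx$ term, which carries two sensitivity factors and one costate factor. Using the crude uniform bounds gives, for $\alpha>1$, $\sum_{t=0}^{T-1}(C_0 T\alpha^T)^2 \cdot C_1(C_1' T\alpha^T + 1) = O(T^4\alpha^{3T})$; the analogous computations give $O(T^4)$ for $\alpha=1$ and $O(T)$ for $\alpha<1$, while the boundary, mixed and $\theta\theta$ terms are of the same or smaller order in each case (strictly smaller once $\alpha\ge 1$). Choosing $K$ large enough to absorb all constants and to dominate over $x_0\in D$ then completes the proof.

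The main obstacle I anticipate is the careful bookkeeping of the mixed partial $\frac{\partial^2 H_t}{\partial x\partial\theta}$ and confirming that all three Hamiltonian-Hessian blocks scale only \emph{linearly} in $\|p_{t+1}\|$: this requires that the $\theta$-dependence of $H_t$, which enters solely through the composition $\pi_\theta^t(x)$, contributes only bounded factors, so that the costate never appears to a power higher than one. Once this is verified, identifying the $xx$ block as dominant and tracking that it is precisely the crude uniform bounds --- rather than the tighter $\alpha^t$-type estimates --- that reproduce the stated (loose but valid) exponents $\alpha^{3T}$, $T^4$, $T$ is routine, though it must be executed consistently across all three regimes.
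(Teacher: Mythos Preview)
Your proposal is correct and follows essentially the same route as the paper: you start from the four-block Hessian formula of Section~\ref{sec:hess_calc}, bound the Hamiltonian second derivatives by $C_1(\|p_{t+1}\|+1)$, reuse the costate bound~\eqref{eq:costate_bound} from Lemma~\ref{lem:grad_bound}, establish the analogous three-case bound on $\|\frac{\partial x_t}{\partial\theta}\|$ from the formula in Proposition~\ref{prop:polgrad}, and then multiply out and sum over $t$ to obtain the stated rates. This is exactly the paper's argument, and your identification of the $xx$ block as the dominant contribution (two sensitivity factors times one costate factor, summed over $T$ terms) is precisely what drives the $T^4\alpha^{3T}$, $T^4$, and $T$ exponents.
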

\begin{proof}
Let the constant $L>0$ be large enough so that it upper-bounds the norm of the first and second partial derivatives of  $R_t$,\ $\pi_{t}^{\theta}$, $F$ and $\hat{F}$. Fix a specific $x_0$ and set of policy parameters $\theta$. Recall from that the Hessian can be calculated as follows:
\begin{align*}
  \nabla^2 J_T(\theta;x_0) &=  \big(\frac{\partial x_T}{\partial \theta}\big)^\top \cdot \nabla^2 R_T(x_T) \cdot \frac{\partial x_T}{\partial \theta}  \\\nonumber
    & + \sum_{t=0}^{T-1} \big(\frac{\partial x_t}{\partial \theta}\big)^\top \cdot \frac{\partial^2}{\partial x ^2}H_t(x_t,p_t,\theta) \cdot \frac{\partial x_t}{\partial \theta}  \\\nonumber
    &+ 2\sum_{t=0}^{T-1} \big(\frac{\partial x_t}{\partial \theta}\big)^\top \cdot \frac{\partial^2}{\partial x \partial \theta}H_t(x_t,p_{t+1},\theta) \\\nonumber
    &+ \sum_{t=0}^{T-1} \frac{\partial^2}{ \partial \theta ^2}H_t(x_t,p_{t+1},\theta).
\end{align*}
Using the assumptions of the theorem, we observe that there exists a constant $C_1>0$ sufficiently large such that
\begin{equation}\label{eq:ham_bound}
 \max \{\frac{\partial^2}{\partial x ^2}H_t(x_t,p_t,\theta), \frac{\partial^2}{\partial x \partial \theta}H_t(x_t,p_{t+1},\theta), \frac{\partial^2}{\partial x \partial \theta}H_t(x_t,p_{t+1},\theta)  \}\leq C_1(\| p_{t+1}\| +1)  
 \end{equation}
 and
 \begin{align}\label{eq:hess_boundy}
   \|\nabla^2 J_T(\theta;x_0)\| &=  L \|\frac{\partial x_T}{\partial \theta} \|^2  + \sum_{t=0}^{T-1} C_1(\| p_{t+1}\| +1) \Big[\| \frac{\partial x_T}{\partial \theta} \|^2 + \|\frac{\partial x_t}{\partial \theta}\| + 1 \Big]
\end{align}
holds for all choices of $x_0$ and $\theta$. 

Using our preceding analysis, we can bound the derivative as the state trajectory as follows:
\begin{align*}
    \| \frac{\partial x_t}{\partial \theta} \|  &= \|  \sum_{t' = 0}^{t-1} \Phi_{t,t'} B_{t'} \frac{\partial \pi_{t}^\theta}{\partial \theta} \| \\
    & \leq   \sum_{t' = 0}^{t-1} L^2 M\alpha^{t-t'}
\end{align*}
This demonstrates that there exists $C_2 >0$ sufficiently large such that:
\begin{equation}
\|  \frac{\partial x_t}{\partial \theta}\| \leq \begin{cases} 
C_2 T \alpha^T & \text{if } \alpha >1 \\
C_2 T & \text{ if } \alpha =1 \\
C_2  & \text{ if } \alpha <1 ,
\end{cases}
\end{equation}
where in the case where $\alpha <1$ we have used the fact that  $\sum_{t' = 0}^{t-1} M\alpha^{t-t'} < M \frac{1}{1-\alpha}$. Combining the previous bounds \eqref{eq:ham_bound}, \eqref{eq:costate_bound} and \eqref{eq:hess_boundy} then demonstrates the desired result. 
\end{proof}

\section{Additional Experiments and Details}
Here, we provide additional simulation experiments and details for experiments presented in the main paper.

\subsection{Policy Structure for Experiments}
\label{sec:pol-class}
Per Section~\ref{sec:embedding-llfeedback}, for each experiment, we construct our policy (\cref{fig:front}) around a low-level controller $\mu \colon \statespace\times\statespace\times\gainspace \to \controlspace$, which produces control inputs $u_t=\mu(x_t, \xdes, G_t)$ to stably track reference trajectories $\{\xdes\}_{t=0}^{T}$ with controller gains $G_t\in\gainspace$.
A task $\psi\colon\R\to\statespace$ produces the reference trajectory $\xdes = \psi(t)$, however tracking is poor due to mismatch in dynamic modeling when forming the controller.
To this end, a neural network $NN_\theta\colon\R^p\times\statespace\to\gainspace\times\Rn$ with parameters $\theta\in\Theta$ generates corrections to the controller gains and to the reference trajectory $(\Delta G_t, \Delta \xdes) = (NN_\theta^1(\xi(t),x_t), NN_\theta^2(\xi(t),x_t) ) = NN_\theta (\xi(t),x_t)$, where $\xi\colon\R\to\R^p$ encodes task objectives.
Our ultimate policy class is of the form $\pi^\theta(t,x_t;\Bar{G}) = \mu(x_t, \psi(t) + NN_\theta^2(\xi(t),x_t), \Bar{G} + NN_\theta^1(\xi(t),x_t))$ where $\Bar{G}$ is a nominal set of feedback gains.
Unless otherwise specified, the neural network is a $64 \times 64$ multilayer perceptron with $\tanh(\cdot)$ activations.
For each of the benchmarks in \cref{sec:car-bench,sec:cartpole-bench,sec:quadrotor-bench}, all methods use the same low-level feedback controller (as described in their respective sections) and the policy structure as described in this section.
Therefore, all methods were trained using the same action space.

\subsection{The Benefit of Low-Level Feedback}
In this experiment, we compare the policy class of \cref{fig:front} against a policy class in which a neural network directly determines open-loop control inputs (as in Section~\ref{sec:exploding-grads}, omitting a low-level stabilizing controller).
We use the double pendulum model from \cite{lynch2017modern}, and the task requires moving the end effector to a desired location, using a reward function based on Euclidean distance.
The following parameters were used for training: Random seeds: 64, Episodes: 50, Episode length: 300, Policy calls per episode: 30, Epochs: 50, Batch size: 5.
\textbf{First experiment:} We provide the true dynamics to both approaches to observe the variance and conditioning, independent of model-mismatch. Training curves for the best learning rate for each approach are depicted in \cref{fig:variances}, which supports the our main theoretical findings. \textbf{Second Experiment:} Next we feed \cref{alg:batch_update} an approximate model that contains pendulum masses and arm lengths that are 70\% and 90\% of the actual values, respectively.
As shown in \cref{fig:variances-mismatch}, the unstable dynamics lead to significant model bias which limited the asymptotic performance of the naive controller without embedded feedback controller.
\begin{figure}[tb]
     \centering
     \begin{subfigure}{0.49\textwidth}
         \centering
         \includegraphics[width=\textwidth]{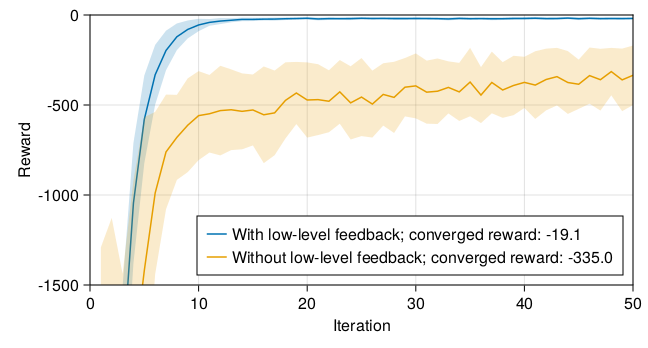}
         \caption{Without model mismatch}
         \label{fig:variances}
     \end{subfigure}
     \begin{subfigure}{0.49\textwidth}
         \centering
         \includegraphics[width=\textwidth]{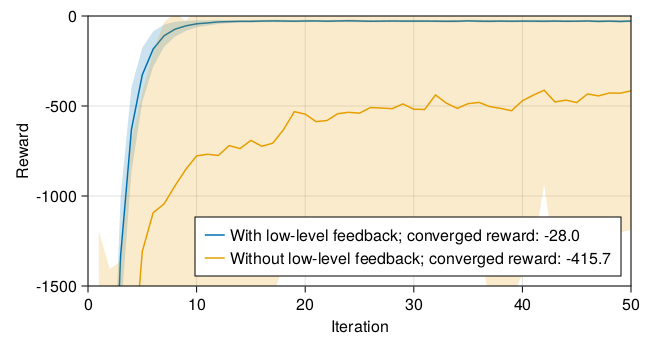}
         \caption{With model mismatch}
         \label{fig:variances-mismatch}
     \end{subfigure}
     \hfill
    \caption{Training curves for the double pendulum experiment. Embedding low-level feedback results in better performance both with and without model mismatch.}
\end{figure}

\subsection{NVIDIA JetRacer Actor Network Outputs}
We now examine neural network outputs during a single execution of the figure-eight task for the NVIDIA JetRacer hardware experiment, depicted in \cref{fig:jr-outputs}.
We see that the neural network issues corrections on the outside of the track, which is reasonable considering the untrained car was tracking the inside of the track.
We note the following controller gains adjustments from the neural network:
\begin{enumerate*}[label=(\roman*)]
    \item an overall negative value selected for the feedforward steering gain $\Delta\Komg$ counteracts the car's inherent steering bias in the positive steering direction;
    \item lower values of forward velocity gain $\Delta K_v$ were selected when crossing the origin, allowing the car to more closely track at this critical point; and
    \item elevated values of $\Delta K_v$ are selected to speed up the car for the rest of the track, increasing reward.
\end{enumerate*}
\begin{figure}[tb]
     \centering
     \includegraphics[width=\textwidth]{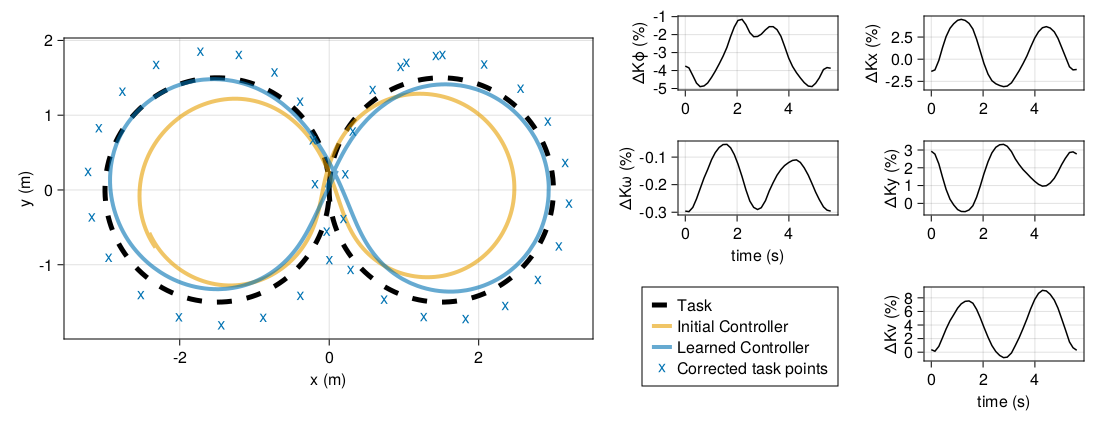}
     \caption{One lap of the car around the figure-8 task before/after training and neural network outputs.}
     \label{fig:jr-outputs}
\end{figure}

\subsection{Parameters for Simulated Car Benchmarks}
\label{sec:car-bench}
Here, we report the details for the simulated car benchmark reported in Figure \ref{fig:benchmark-quad-result} in the main document. For each algorithm the episode length is 300 steps of the environment, and the simulation step was 0.1 seconds. For each method, a total of 10 random seeds was run and the actor network was a $64 \times 64$ feedforward network defining the splines tracked by the low-level controller. Further details for each tested method are:

\textbf{Our Method}: Learning rate: 0.1, Episodes per iteration: 5. \textbf{MBPO}: Dynamics model: 2 layer feedforward tanh network $(256 \times 256)$, Models in ensemble: 5, Learning rate: \num{1e-3}, Episodes per iteration: 10, Critic Network: 2 layer feedforward tanh network $(256 \times 256)$. \textbf{SVG}: Dynamics model: 2 layer feedforward tanh network $(256 \times 256)$, Learning rate: \num{2.5e-3}, Episodes per iteration: 20 , Critic Network: 2 layer feedforward tanh network $(256 \times 256)$. \textbf{PPO:} Episodes per iteration: 25, Learning rate: \num{1e-5}, Critic Network: 2 layer feedforward tanh network $(256 \times 256)$.  \textbf{SAC:} Episodes per iteration: 5, Learning rate: \num{1e-5}, Critic Network: 2 layer feedforward tanh network $(256 \times 256)$.

\subsection{Cartpole Simulation Benchmark}
\label{sec:cartpole-bench}
In this benchmark presented in Figure \ref{fig:benchmark-cart-result}, we attempt to track a desired end effector position for the classic simulated cart-pole environment.
In particular, we use a linearizing controller \cite{sastry2013nonlinear} to approximately track desired positions for the end effector. For each algorithm the episode length was 100, and the simulation step was 0.1 seconds. For each method, a total of 10 random seeds were run and the actor network was a $64 \times 64$ feed-forward tanh network defining desired spline parameters for the desired trajectory tracked by the low-level controller. 

\textbf{Our Method}: Learning rate: 0.05, Episodes per iteration: 5, Simplified Model: constructed by decreasing the mass and friction parameters of the true model by 50 percent. \textbf{MBPO}: Dynamics model: 2 layer feedforward tanh network $(256 \times 256)$, Models in ensemble: 5, Learning rate: \num{2.5e-3}, Episodes per iteration: 10, Critic Network: 2 layer feedforward tanh network $(256 \times 256)$. \textbf{SVG}: Dynamics model: 2 layer feedforward tanh network $(256 \times 256)$, Learning rate: \num{1e-3}, Episodes per iteration: 20 , Critic Network: 2 layer feedforward tanh network $(256 \times 256)$. \textbf{PPO:} Episodes per iteration: 25, Learning rate: \num{4e-5}, Critic Network: 2 layer feedforward tanh network $(256 \times 256)$.  \textbf{SAC:} Episodes per iteration: 5, Learning rate: \num{5e-5}, Critic Network: 2 layer feedforward tanh network $(256 \times 256)$.

\begin{figure}[h!]
\centering
\includegraphics[width=0.7\textwidth]{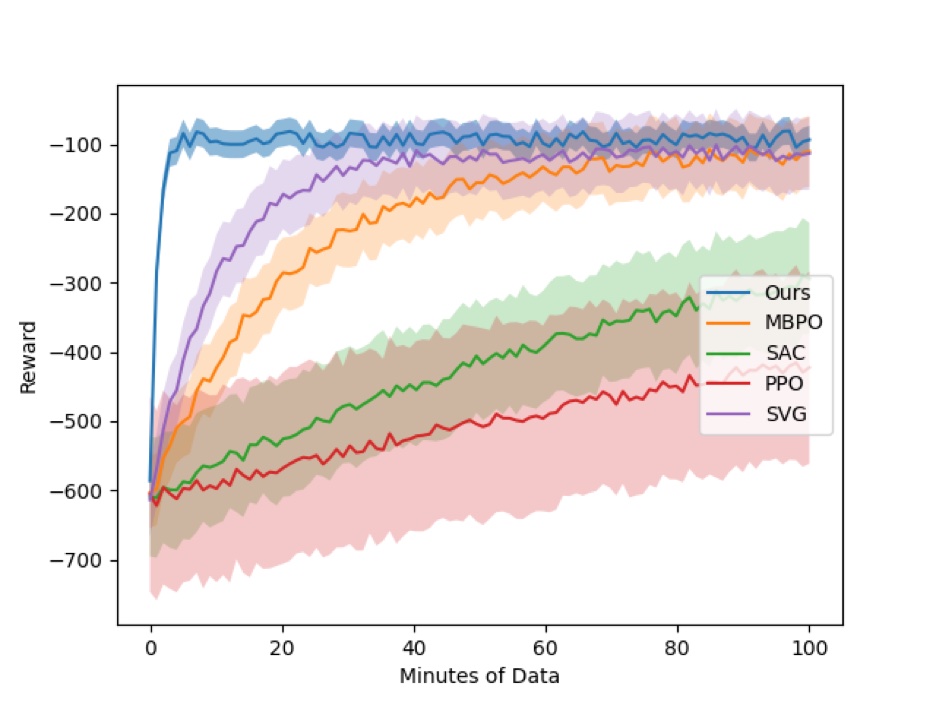}
    \caption{ Training curves for different algorithms applied to the cart-pole environment. }
    \label{fig:benchmark-cart-result}
\end{figure}

\subsection{Quadrotor Benchmark}
Next we conduct a simulation benchmark using the quadrotor dynamics model from \cite{beard2008quadrotor} and present the results in Figure \ref{fig:benchmark-quad-result}.
The simulator timestep is 0.1s and each episode is 400 timesteps. The task is to follow a figure-8 pattern in the air. A total of 10 random seeds were run for each method, and for each algorithm the actor network was a $128\times 128$ feed-forward tanh network, which specified the splines and feedback gains for the tracking controller from \cite{beard2008quadrotor}. 
\label{sec:quadrotor-bench}

\begin{figure}[h!]
\centering
\includegraphics[width=0.7\textwidth]{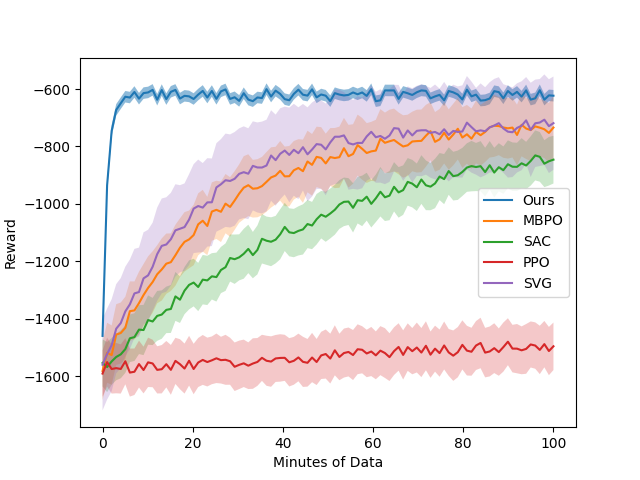}
    \caption{ Training curves for different algorithms applied to the quadrotor environment. }
    \label{fig:quadrotor}
\end{figure}

\textbf{Our Method}: Learning rate: 0.1, Episodes per iteration: 10, Simplified model: constructed by decreasing the mass and friction parameters of the true model by 50 percent. \textbf{MBPO}: Dynamics model: 2 layer feedforward tanh network $(256 \times 256)$, Models in Ensemble: 5, Learning rate: \num{2.5e-4}, Episodes per iteration: 10, Critic Network: 2 layer feedforward tanh network $(256 \times 256)$. \textbf{SVG}: Dynamics model: 2 layer feedforward tanh network $(256 \times 256)$, Learning rate: \num{1e-4}, Episodes per iteration: 20 , Critic Network: 2 layer feedforward tanh network $(256 \times 256)$. \textbf{PPO:} Episodes per iteration: 25, Learning rate: \num{1e-5}, Critic Network: 2 layer feedforward tanh network $(256 \times 256)$.  \textbf{SAC:} Episodes per iteration: 5, Learning rate: \num{3e-5}, Critic Network: 2 layer feedforward tanh network $(256 \times 256)$.

\subsection{Dynamics Mismatch Study -- Simulated Car}
In simulation for the car experiment, we study the performance of our approach as model accuracy degrades.
We use the following actual dynamics model:
\begin{equation}
   \small \label{eqn:cardynupgraded}
    \begin{bmatrix}
        \xnext \\ \ynext \\ \velnext \\ \hanext
    \end{bmatrix} =
    \begin{bmatrix}
        \x + \vel\cos\parens{\ha}\deltat \\
        \y + \vel\sin\parens{\ha}\deltat \\
        \vel + \parens{\accelscale\accel - \veldrag\vel^2}\deltat \\
        \ha + \parens{\turnscale\turnrate - \turnbias}\vel\deltat
    \end{bmatrix},
\end{equation}
where $\accelscale$ and $\turnscale$ represent control input scaling for acceleration and turn rate, respectively; $\veldrag$ is the coefficient of drag; and $\turnbias$ represents bias in the car's steering.
The set $\mathcal{A} := \{\accelscale, \turnscale, \veldrag, \turnbias\}$ parameterizes the actual dynamics of the car.
We define a mismatch coefficient, $\mismatch$, which scales these parameters to cause mismatch between the actual model and the model used for training.
That is, we use the set $\mathcal{B} := \{\mismatch\accelscale, \mismatch\turnscale, \mismatch\veldrag, \mismatch\turnbias\}$ with \cref{eqn:cardynupgraded} to form our approximate dynamics model $\hat{F}$:
\begin{equation}
   \small \label{eqn:cardynmismatch}
    \begin{bmatrix}
        \xnext \\ \ynext \\ \velnext \\ \hanext
    \end{bmatrix} =
    \begin{bmatrix}
        \x + \vel\cos\parens{\ha}\deltat \\
        \y + \vel\sin\parens{\ha}\deltat \\
        \vel + \parens{\mismatch\accelscale\accel - \mismatch\veldrag\vel^2}\deltat \\
        \ha + \parens{\mismatch\turnscale\turnrate - \mismatch\turnbias}\vel\deltat
    \end{bmatrix}.
\end{equation}
Note that if $\mismatch=1$, then the two models match exactly.

\textbf{Training Details:}
We perform training with various values of $\mismatch$, over 10 random seeds, each with 15 training iterations, and present the results in \cref{fig:mismatch} and \cref{fig:mismatcheval}.
We find that, even in cases of large model mismatch, a policy is learned which improves the performance of the car. 
\begin{figure}[htb]
     \centering
     \includegraphics[width=.6\textwidth]{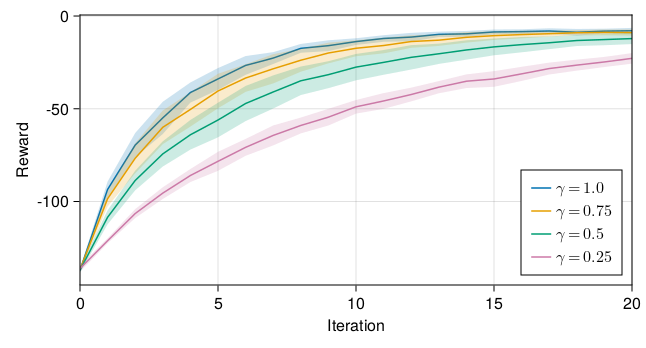}
     \caption{Training curves for the simulated car experiment for varying degrees of model mismatch.}
     \label{fig:mismatch}
\end{figure}
\begin{figure}[htb]
     \centering
     \includegraphics[width=.8\textwidth]{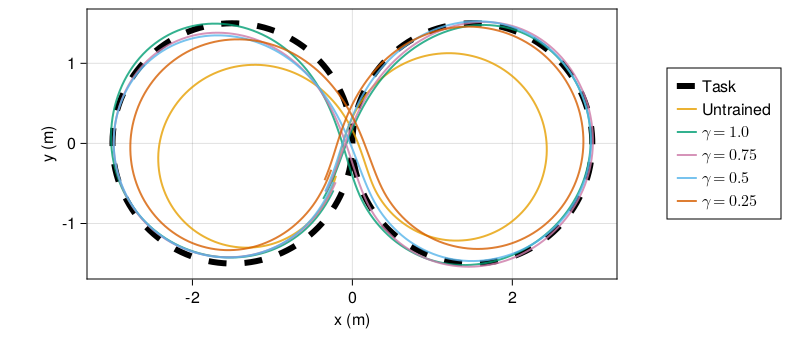}
     \caption{One lap of the car around the figure-8 task where training was performed with varying degrees of model mismatch.}
     \label{fig:mismatcheval}
\end{figure}
\end{document}